%% file: arxiv_vs.tex
\documentclass[11pt,letterpaper]{article} 
\usepackage[letterpaper,top=1in,bottom=1in,left=1in,right=1in]{geometry} 
\usepackage[hyphens]{url} 
\usepackage{algorithm}
\usepackage{algpseudocode}

\usepackage{amsmath,amssymb,amsthm}
\usepackage{mathtools}
\usepackage{bbm}          % \mathbbm{1}
\usepackage{booktabs}
\usepackage{microtype}
\usepackage{graphicx}
\usepackage{subcaption}
\usepackage{hyperref}
\usepackage{cleveref}
\usepackage[dvipsnames]{xcolor}
\usepackage[normalem]{ulem}
\usepackage{natbib}  % DO NOT CHANGE THIS AND DO NOT ADD ANY OPTIONS TO IT

\newtheorem{theorem}{Theorem}

\newtheorem{proposition}[theorem]{Proposition}

\newtheorem{definition}[theorem]{Definition}
\newtheorem{assumption}[theorem]{Assumption}

\newcommand{\Rtot}{R_{\mathrm{total}}}    % total regret

\newcommand{\E}{\mathbb{E}}

\title{Fair Recommender Systems for Intersecting Groups} 

\author{ Christine Ling\thanks{Georgia Institute of Technology. Email: cling39@gatech.edu} \and Juba Ziani\thanks{Georgia Institute of Technology. Email: jziani3@gatech.edu} } \date{\today}

\begin{document}

\maketitle

%% Authors omitted for anonymous review

\begin{abstract}
Recommendation systems that optimize for the average user can fail to capture the preferences of diverse demographic groups. We consider a dynamic recommendation system, where arriving users can belong to different demographic groups. Importantly, these groups can intersect, and a user may belong to several groups. We first show that over time, any group-blind policy (which excludes all current-user group information) recommending $K \geq 1$ items incurs linear total regret under mild conditions. In turn, we introduce three group-aware algorithms based on the classical Upper Confidence Bound algorithm (UCB): i) \emph{Intersection-Fitting}, which runs UCB on each intersection, but has a sublinear group- and intersection-level regret guarantee with an exponential dependence in the number of groups; and ii) two group-level heuristics, \emph{Inner-Outer UCB} and \emph{Group-Item UCB} which allocate different arms to different groups, and use a sleeping expert meta-learner to choose which active group's recommendation an intersection should inherit at any given time step. We provide experimental results on two datasets, Goodreads and BoardGameGeek, and highlight that our heuristics can be preferable to the intersection-fitting algorithm when intersection-level preferences are closely related to group-level preferences, but that intersection-fitting performs better when different intersections within a same group prefer different items.
%We study several algorithms to address this. Intersection-fitting has a formal regret guarantee, but needs to fit an exponential number of intersections and may be suboptimal when intersections inherit group structure. To address this, we adapt two optimistic, Upper Confidence Bound (UCB)-based heuristics that make recommendations based on group-level structure. While these heuristics lack formal regret guarantees, our experiments show that pooling data generally outperforms intersection-fitting when intersections inherit group structure (Goodreads). Conversely, on BoardGameGeek, pooling degrades performance, and exact intersection-fitting achieves lower regret. This contrast is consistent with a trade-off between sharing information across overlapping groups and learning each intersection independently.
\end{abstract}

\maketitle
% \raggedbottom
% \clubpenalty=150
% \widowpenalty=150
% \brokenpenalty=100

\section{Introduction}
\label{sec:intro}
Sequential recommendation systems struggle to serve user populations with heterogeneous tastes. When a group-blind algorithm %(which excludes all current-user information)
optimizes for the average user, it marginalizes niche preferences. The best such a platform can do is serve a "one-size-fits-all'' slate, which can ignore preferences of and result in smaller welfare for smaller groups. Closing this gap requires conditioning recommendations on user demographics. 

Group-aware recommendations are straightforward when groups are disjoint. However, real-world users often belong to multiple, overlapping groups simultaneously (e.g., age, region, genre preference). This intersecting structure results in a fundamental learning challenge: there can be too many overlapping combinations to learn efficiently. Maintaining an independent policy for every demographic intersection requires a state space exponential in the number of base groups $G$, i.e. of size up to $2^G$. This fragments the exploration and learning budget and risks slow finite-horizon learning on sparse intersections, even when the original base groups are well-represented. Our paper aim to address this gap by proposing recommendation algorithms where intersection-level recommendations inherit the preferences learned at the level of the groups that constitute these intersections.

\paragraph{Summary of contributions.} Our contributions are as follows:
\begin{itemize}
    \item Section~\ref{sec:model} introduces a general model of a welfare-maximizing recommender serving users who belong to arbitrarily many, possibly overlapping, demographic groups.
    \item Section~\ref{sec:group_blind} proves that every group-blind policy recommending $K$-slates incurs strictly linear total regret whenever the population-optimal slate is strictly suboptimal for a demographic segment (Proposition~\ref{prop:universal}), and gives an instance with an explicit constant per-round gap (Proposition~\ref{prop:constantgap}).
\item Section~\ref{sec:groupaware} introduces group-aware algorithms. We prove a worst-case $\Omega(\sqrt{\alpha_kT})$ regret lower bound for any algorithm. We then study three approaches: exact intersection-fitting (which guarantees no-intersection regret), and two pooling heuristics (Inner--Outer UCB and Group--Item UCB). The pooling methods require only $O((G+1)n)$ item statistics and share data across overlapping attributes.
    % \item Section~\ref{sec:experiments} compares these methods empirically on rating simulations built from Goodreads and BoardGameGeek data. These evaluations are consistent with a core trade-off in intersectional learning: the sample starvation of fitting each exact demographic intersection independently versus the risk of cross-group interference when pooling data across base groups.
    \item Section~\ref{sec:experiments} empirically compares these methods on Goodreads and BoardGameGeek rating simulations. The results are consistent with a core trade-off in intersectional learning: the sample starvation of fitting each exact demographic intersection independently versus the risk of cross-group interference of pooling data.
\end{itemize}

\section{Related Work}
\label{sec:related}
\noindent \textbf{Fairness and Diversity in Recommender Systems:}
Prior work shows that collaborative filtering and popularity-based recommendation algorithms narrow consumption diversity \cite{fleder2007recommenders, chaney2018algorithmic} and exacerbate exposure disparities \cite{mansoury2020feedback, celis2019controlling}. In sequential settings, feedback dynamics induce degenerate recommendation cycles where historically unobserved items are starved of exposure \cite{jiang2019degenerate, sinha2017deconvolving}. %Our model relates to performative prediction, where algorithmic choices shift the data distribution \cite{perdomo2020performative, mendler2020stochastic}, and to strategic content adaptation under algorithmic scoring \cite{hardt2015strategicclassification, jagadeesan2023supply}.
While recent recommendation literature studies unequal quality across multiple user and item group combinations on existing datasets \cite{intersectional_www_2024}, we analyze this as a sequential group-regret problem. 

\smallskip
\noindent \textbf{Fair Online Learning and Constrained Online Learning:} Foundational works introduce meritocratic fairness, requiring that an algorithm not favor an arm over another with higher expected reward \cite{joseph2016fair, joseph2018meritocratic}. Related work extends this principle to reinforcement learning, comparing actions through their long-term discounted rewards \cite{jabbari2017fairness}. Other approaches study calibrated fairness, matching each arm’s selection probability to its probability of having the highest realized reward \cite{liu2017calibrated}, or impose minimum arm-selection or user-allocation rates \cite{patil2020achieving, chen2020fair}. Offline contextual-bandit methods accommodate a broad family of fairness constraints with high-probability guarantees \cite{metevier2019offline}, while constrained ranking methods impose representation requirements on groups of items within ranking prefixes \cite{celis2018ranking}. 
While some recent literature handles overlapping groups, our framework emphasizes the specific combination of group welfare regret, slate recommendations, and selected-item feedback.
% However, much of this literature models user or item categories as strictly disjoint; in contrast, our framework explicitly models overlapping, intersectional memberships.
Most closely related is \cite{acharya2023oracle}, who develop oracle-efficient algorithms for achieving multi-group regret in overlapping demographic settings. Their reduction coordinates group learners using an observed outcome that evaluates each learner's loss. 

\smallskip
\noindent \textbf{Intersectional Fairness:} \cite{kearns2018preventing} show that auditing fairness along isolated marginal categories allows discrimination across intersecting subgroups (``fairness gerrymandering''). This motivated subgroup fairness and multicalibration frameworks in supervised learning, which impose fairness or calibration requirements across overlapping demographic collections \cite{hebert2018multicalibration, kim2018multiaccuracy, rothblum2021multigroupagnosticpaclearnability, tosh2022multigroup, foulds2019intersectional}.

\section{Model}
\label{sec:model}
We model a welfare-maximizing recommendation platform serving users who arrive over $T$ discrete rounds, indexed $t \in \{0, 1, \dots, T-1\}$. Users belong to potentially many, overlapping demographic groups simultaneously. Group-aware policies observe the arriving user's active base-group set before recommending items; group-blind policies ignore it. The full attribute vector need not be observed.

% \paragraph{Fairness metric.} We evaluate fairness directly through regret. We measure group fairness as similar and sublinear regrets across all base demographic groups, ensuring no single group pays a linear welfare penalty to subsidize the population average. Intersectional fairness is measured as similar regret across exact overlapping intersections, ensuring that strong performance on a broad base group does not hide the sample starvation of sparse intersections within it.

\noindent \paragraph{Users and Group Membership}
\label{sec:model-users-valuations}
Each arriving user $i$ has an attribute vector $a_i\in\mathcal A$, drawn i.i.d.\ from a population distribution $\mathcal D$; $i_t$ is the user arriving at round $t$. To avoid an exponential parameterization, we describe demographic structure through a finite collection of $G$ base groups $g_1,\dots,g_G$ rather than the full attribute vector. An indicator $\phi_k$ determines whether a user belongs to $g_k$; if $\phi_k(a_i)=1$ we say $g_k$ is active (or \emph{awake}) for user $i$. We also introduce a universal, always-active group $g_0$ ($\phi_0\equiv1$) to ensure group-aware algorithms can match group-blind performance when group-blindness is optimal. Identifying $g_k$ with its index $k$, user $i$'s active set is $\mathcal G_i:=\mathcal G(a_i)=\{k:\phi_k(a_i)=1\}$. The arrival rate of $g_k$ is $\alpha_k:=\Pr_{a\sim\mathcal D}(\phi_k(a)=1)$ and $\mathcal D_k:=\mathcal D(\cdot\mid\phi_k(a)=1)$.

\noindent \paragraph{Group intersections}
For $P\subseteq\{0,\dots,G\}$, the exact intersection $\iota_P:=\{a:\mathcal G(a)=P\}$ contains users whose active set is exactly $P$, including non-membership in all groups outside $P$; its distribution is $\mathcal D_P:=\mathcal D(\cdot\mid\mathcal G(a)=P)$ and its arrival rate $\alpha_{\iota_P}:=\Pr_{a\sim\mathcal D}(\mathcal G(a)=P)$. There are $G$ base groups plus the universal group, but up to $2^G$ intersections; $H$ is the number with positive arrival rate and $H_k$ the number with $k\in P$. Following sections evaluate two specific instantiations of this structure: a two-group setting ($\mathcal A=\{M,m\}$, with non-universal groups $g_M,g_m$)% partitioning the population)
 and the pattern-only setting of our experiments ($\mathcal A=\{0,1\}^G$, $\phi_k(a)=a_k$).

\noindent \paragraph{Items and Valuations} 
\label{sec:model-items}
There are $n$ items, indexed $j \in [n] := \{1,\dots,n\}$. Conditional on $a_i$, user $i$'s valuation $v_{ij}$ for item $j$ is drawn independently across users and items from a distribution on $[0,V_{\max}]$, where $V_{\max} > 0$. We define the expected welfare of item $j$ conditional on the full attribute vector as $v_{j,a} := \E[v_{ij} \mid a_i = a]$. From this, we define the expected welfare of item $j$ over any conditioning distribution $\mathcal D'$ as $v_j(\mathcal D') := \E_{a \sim \mathcal D'}[v_{j,a}]$. This defines the population mean $v_j^*:=v_j(\mathcal D)$, the group mean $v^*_{j,k}:=v_j(\mathcal D_k)$, and the intersection mean $v^*_{j,\iota_P}:=v_j(\mathcal D_P)$.

% The catalog contains $n$ items, indexed $j \in [n] := \{1,\dots,n\}$. We assume there is a finite $V_{\max}>0$ bounding the maximum possible valuation (or rating) user $i$'s valuation $v_{ij}$ for item $j$ can take; i.e., $0\le v_{ij}\le V_{\max}$.
%  Conditional on $a_i$, user $i$'s valuation $v_{ij}$ for item $j$ is drawn independently across users and items from a distribution on $[0,V_{\max}]$. We define the expected welfare of item $j$ conditional on the full attribute vector as $v_{j,a} := \E[v_{ij} \mid a_i = a]$. From this, we define the expected welfare of item $j$ over any conditioning distribution $\mathcal D'$ as $v_j(\mathcal D') := \E_{a \sim \mathcal D'}[v_{j,a}]$. This establishes the optimality concepts used throughout:
% \begin{itemize}
% \item \textbf{Population:} $v_j^*:=v_j(\mathcal D)$.
% \item \textbf{Group:} $v^*_{j,k}:=v_j(\mathcal D_k)$.
% \item \textbf{Intersection:} $v^*_{j,\iota_P}:=v_j(\mathcal D_P)$.
% \item \textbf{Attribute-conditional:} $v_{j,a}$ is the conditional mean of item $j$ given $a$ (not itself an optimum).
% \end{itemize}
% By total expectation, the pooled marginal mean for a base group $g_k$ is linked to its constituent intersections via $v^*_{j,k}=\sum_{P\ni k}(\alpha_{\iota_P}/\alpha_k)\,v^*_{j,\iota_P}$.

\noindent \paragraph{Arrivals and Slates}
\label{sec:model-dynamics}
% The arrival rate of a group $g_k$ is the probability that a randomly arriving user belongs to it, denoted $\alpha_k := \Pr_{a \sim \mathcal{D}}(\phi_k(a) = 1)$. Consequently, the arrival rate of an intersection is $\alpha_{\iota_P} := \Pr_{a\sim\mathcal{D}}(\mathcal{G}(a)=P)$.
In each round $t$, the platform recommends a slate $S_t\subseteq[n]$ of exactly $K$ distinct items, where $1\le K\le n$.

\subsection{Feedback Model} 
\label{sec:model-feedback}
Upon serving the slate $S_t$, the platform observes the user's score $v_{i_tj}$ only for the recommended items $j\in S_t$; write $Y_{j,t}:=v_{i_tj}\mathbf 1[j\in S_t]$. Scores corresponding to unselected items remain unobserved. The scores of selected items may update any model the policy keeps, but this remains partial feedback. The history is the filtration $\mathcal F_t:=\sigma\bigl(U_s,S_s,\{Y_{j,s}\}_{j\in S_s},\mathcal G^{\rm obs}_s : s<t\bigr)$, which includes algorithmic randomness $U_s$ and, for group-aware policies, past observed memberships $\mathcal G^{\rm obs}_s$. The platform's realized welfare at round $t$ is the slate total, $W_t:=\sum_{j\in S_t}Y_{j,t}$.

\subsection{Recommendation Policies}
\label{sec:model-recommendation}
We consider both group-blind and group-aware policies. 
A (possibly randomized) group-blind policy draws $S_t$ from a distribution $\nu_t(\cdot\mid\mathcal F_t)$ that does not depend on the current user's information. A group-aware policy may additionally observe which groups in $\mathcal G$ are active for the arriving user $i$---represented by the active set $\mathcal{G}_i$---and condition its recommended items on it. 

\paragraph{Baselines and UCB scopes}
We use \textbf{group-blind top-$K$ UCB} as the baseline throughout the paper. Serve the $K$ items with the largest UCB indices \cite{auer2002finite}; ties are broken by a fixed priority order, the same for every method: base groups before the universal group, items by index, and (group, item) pairs lexicographically by group and then item. The full algorithm is given below: 

\begin{algorithm}[ht] 
\caption{Group-Blind top-$K$ UCB} 
\label{alg:group_blind_ucb} 
\textbf{Initialize:} For all items $j \in [n]$, set $N_{j,0} = 0$ and $\hat v_{j,0} = 0$. \\ 
\textbf{For} each round $t=0, 1, \dots, T-1$: 
\begin{enumerate}
\item \textbf{Index Computation:} For every item $j$, if $N_{j,t} = 0$, set $I_{j,t} = +\infty$. Otherwise, compute the index using the global clock $t+1$: 
\[ I_{j,t} = \hat v_{j,t} + V_{\max}\sqrt{\frac{2\ln\max\{2,t+1\}}{N_{j,t}}} \]
\item \textbf{Serve:} Select the slate $S_t \subseteq [n]$ of the $K$ distinct items with the highest indices $I_{j,t}$ (ties broken by smaller item index) and recommend it.
\item \textbf{Observe Feedback:} Observe the realized welfare $Y_{j,t} \in [0, V_{\max}]$ only for $j \in S_t$.
\item \textbf{Update Statistics:} For each $j \in S_t$: $N_{j,t+1} = N_{j,t} + 1$ and $\hat v_{j,t+1} = \hat v_{j,t} + \frac{Y_{j,t} - \hat v_{j,t}}{N_{j,t+1}}$. For $j \notin S_t$, carry values forward.
\end{enumerate} 
\end{algorithm}

% Each instance keeps, for every item $j$, a count $N$ of observed scores, their mean $\hat v$, and a clock $\tau$ equal to the number of rounds, up to and including the current one, in which the instance is updated. The index $I$ of an item is $+\infty$ if $N=0$, and otherwise
% \begin{equation}\label{eq:ucb} \hat v+V_{\max}\sqrt{\frac{2\ln\max\{2,\tau\}}{N}}. \end{equation}

% To isolate the effect of group awareness, we instantiate top-$K$ UCB at different demographic scopes. 
% Each scope tracks its own counts $N$, means $\hat v$, clock $\tau$, and index $I$, denoted by their subscripts.
We introduce three group-aware algorithms in Section~\ref{sec:groupaware} and study their performance in Section~\ref{sec:experiments}:

\begin{itemize}
\item \textbf{Intersection-Fitting UCB} (Section~\ref{sec:groupaware}, Algorithm~\ref{alg:isfit}): Maintains an independent UCB instance for every realized intersection, with no data sharing across intersections.
\item \textbf{Inner--Outer UCB} (Section~\ref{sec:overlapping}, Algorithm~\ref{alg:hier}): Tracks statistics at the base-group level and routes each user to the active group whose proposed top-$K$ slate has the highest average item index.
\item \textbf{Group--Item UCB} (Section~\ref{sec:overlapping}, Algorithm~\ref{alg:flat}): Treats active (group, item) pair as independent arms, allowing a slate to combine items favored by different overlapping groups.
\end{itemize}

\subsection{Benchmarks and Metrics }
\label{sec:model-benchmarks-metrics}
\paragraph{Metrics.} 
Real platforms recommend sets of items, so we formulate our metric directly for a general set size $K$. The highest expected welfare a fully group-blind policy can achieve is the population-optimal $K$-slate $B^K_{\rm pop} := \max_{|S|=K}\sum_{j\in S}v_j^*$. Similarly, for a specific base group $g_k$, the group-optimal slate welfare is $v_k^{*,K}:=\max_{|S'|=K}\sum_{j\in S'}v_{j,k}^*$.

For any attribute vector $a$, the analogous attribute-conditional optimal slate welfare is $v^{*,K}_a:=\max_{|S'|=K}\sum_{j\in S'}v_{j,a}$. Since a fixed recommendation for a group cannot exceed the welfare of perfectly personalizing to each member's exact attributes, the group-optimal slate welfare is bounded by the expected attribute-conditional optimum: $v_k^{*,K} \le \E_{a\sim\mathcal D_k}[v^{*,K}_a]$. For an exact intersection $\iota_P$, the intersection-optimal slate welfare is similarly defined as:
\[ v_{\iota_P}^{\mathrm{shared},K} := \max_{|S'|=K}\sum_{j\in S'}v_{j,\iota_P}^*. \]
To simplify notation in proofs and summations, let $\iota$ index an exact intersection. Let $p_\iota:=\alpha_{\iota}$, $\mu_{\iota,j}:=v^*_{j,\iota}$, and $b_\iota:=v^{\mathrm{shared},K}_{\iota}$. A policy that observes only the demographic membership $\mathcal G(a)$ has an expected per-round welfare of at most the intersection-level oracle $B_{\mathrm{int}}:=\sum_\iota p_\iota b_\iota$. In contrast, the fully personalized ceiling $B^{\mathcal G}_R:=\E_{a\sim\mathcal D}[v^{*,K}_a]$ defines total regret. Since the maximum of an average is at most the average of maxima, $B^{\mathcal G}_R\ge B_{\mathrm{int}}\ge B^K_{\rm pop}$.

%Group and intersection regret use their respective best fixed $K$-slate benchmarks. The ceiling uses conditional expected item values, not the user's realized scores.

We evaluate policies against three regret metrics.

\begin{definition}[Total Regret]
\label{def:total_regret}
The cumulative total regret measures the aggregate platform shortfall against the global oracle:

\[
\Rtot(T) := \sum_{t=0}^{T-1} \bigl(B^{\mathcal G}_R - \E[W_t]\bigr).
\]
\end{definition}
Total regret measures aggregate platform efficiency, but obscures performance on distinct demographic groups.

\begin{definition}[Group Regret]
\label{def:group_regret}
For base group $g_k$, the cumulative group regret restricts evaluation to rounds in which $g_k$ is active, measured against the group's own slate benchmark:
\[
    R_k(T) := \sum_{t=0}^{T-1} \E\bigl[\mathbf 1\{g_k \in \mathcal G_{i_t}\}\cdot(v_k^{*,K} - W_t)\bigr].
\]
A policy achieves \emph{no group regret} if $\limsup_{T\to\infty}R_k(T)/(\alpha_kT)\le0$ for every
fixed group with $\alpha_k>0$; equivalently $[R_k(T)]_+=o(\alpha_kT)$. %Group regret can be negative when personalization beats the group's best fixed slate.
\end{definition}

\begin{definition}[Intersection Regret]
\label{def:intersection_regret}
For an intersection $\iota_P$, restrict evaluation to rounds where the arriving user's active set is exactly $P$. We judge algorithms against the shared-slate benchmark:
\[
    R_{\iota_P}^{\mathrm{shared}}(T):=\sum_{t=0}^{T-1}\E\bigl[\mathbf 1\{\mathcal G_{i_t}=P\}\cdot(v_{\iota_P}^{\mathrm{shared},K}-W_t)\bigr].
\]
A policy achieves no intersection regret if $R^{\mathrm{shared}}_{\iota_P}(T) = o(\alpha_{\iota_P}T)$ for every fixed intersection with positive arrival rate.
\end{definition}

Note that group regret can be negative when personalization at the intersection level beats the overall group's best \emph{fixed} slate.

\section{The Structural Cost of Group-Blindness}
\label{sec:group_blind}
% Group-blind algorithms pay a structural penalty that does not shrink with additional exploration. To illustrate this, we specialize the general model to a simplified instance with $n=2$ items, $K=1$ recommendations per round, and two base groups, $g_M$ and $g_m$, with arrival shares $\alpha := \alpha_M > 1/2$ and $\alpha_m = 1-\alpha$. Valuations $v_{ij} \mid g$ are drawn independently, yielding a pooled population expected welfare for item $j$ of $v_j^* := \alpha v_{j,M}^* + (1-\alpha)v_{j,m}^*$.

% We fix an instance satisfying the divergent-peaks condition $\mathcal{E} := \{v_{1,M}^* > v_{2,M}^*\} \cap \{v_{2,m}^* > v_{1,m}^*\}$, with strictly positive group-level gaps $d_M := v_{1,M}^* - v_{2,M}^*$ and $d_m := v_{2,m}^* - v_{1,m}^*$. Under $\mathcal{E}$, the fully personalized ceiling is $B^{\mathcal G}_R = \alpha v^*_{1,M} + (1-\alpha) v^*_{2,m}$. Because a group-blind policy treats the population as a monolith, it evaluates items based on their pooled expected welfare $v_j^*$. To illustrate the majority-favoring case, assume $\alpha d_M > (1-\alpha)d_m$, which ensures $v_1^* > v_2^*$. Under this condition, a population-level no-regret guarantee ensures the algorithm makes a vanishing fraction of suboptimal selections, converging toward item $1$. This restricts the minority group to their less-preferred item, creating an efficiency gap that demographic-aware routing could recover.

% This structural penalty is not limited to two groups or single items. Returning to the general model with arbitrarily many overlapping groups and slate size $K$, 

We start with a mild assumption on user preferences:
\begin{assumption}[Non-degeneracy]\label{ass:nondegenerate}
Let $S^*$ be the unique population-optimal $K$-slate. Some group $g_k$ satisfies $v^{*,K}_k >\sum_{j\in S^*}v^*_{j,k}$.
\end{assumption}

% This assumption ensures that the population-optimal slate is strictly suboptimal for at least one demographic group. Without this condition, a single fixed slate would simultaneously maximize welfare for every group, and group-blindness would incur no structural penalty.

This assumption ensures the population-optimal slate is strictly suboptimal for at least one demographic group. Otherwise, a single fixed slate would maximize welfare for every group, and group-blindness would incur no structural penalty.

We show that any group-blind policy %---including sharper exploration rules like UCB---
inherits a strict $\Theta(T)$ regret penalty. This gap arises whenever the population-optimal slate is strictly suboptimal for a specific demographic segment; optimizing for the ``average'' user yields a linear welfare penalty.

\begin{proposition}[Universal structural gap]
\label{prop:universal}
For every group-blind policy, $\E[W_t\mid\mathcal F_t]\le B^K_{\rm pop}$, hence $\Rtot(T)\ge T\,(B^{\mathcal G}_R-B^K_{\rm pop})$. Under Assumption~\ref{ass:nondegenerate}, the group-aware oracle exceeds the population optimum ($B_{\mathrm{int}} > B^K_{\rm pop}$), yielding a strict $\Theta(T)$ structural penalty.
\end{proposition}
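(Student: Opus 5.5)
The proof has three parts: a conditioning argument for the per-round welfare bound, a sum over rounds for the regret bound, and an explicit slate that beats the population optimum.

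\textbf{Per-round bound.} Fix $t$ and condition on $\mathcal F_t$ together with the round-$t$ randomness $U_t$. For a group-blind policy the slate $S_t$ is drawn from $\nu_t(\cdot\mid\mathcal F_t)$, which does not depend on the current user's information. Since users arrive i.i.d., $a_{i_t}\sim\mathcal D$ is independent of $(\mathcal F_t,S_t)$. Hence, for any realized slate $S$,
\[
\E[W_t\mid\mathcal F_t,S_t=S]=\sum_{j\in S}\E_{a\sim\mathcal D}[v_{j,a}]=\sum_{j\in S}v_j^*\le B^K_{\rm pop}.
\]
Averaging over $S_t\sim\nu_t$ gives $\E[W_t\mid\mathcal F_t]\le B^K_{\rm pop}$.

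\textbf{Regret bound.} Taking expectations gives $\E[W_t]\le B^K_{\rm pop}$ for every round. Summing over $t$ in Definition~\ref{def:total_regret} yields $\Rtot(T)\ge T(B^{\mathcal G}_R-B^K_{\rm pop})$. The only subtlety is to state the independence of the arriving user from the history and the policy's internal randomness explicitly; this follows from the i.i.d.\ arrivals in the model.

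\textbf{Strict gap.} Under Assumption~\ref{ass:nondegenerate} I will exhibit a membership-aware policy whose welfare exceeds $B^K_{\rm pop}$. Such a policy's welfare is at most $B_{\mathrm{int}}$, so this gives $B_{\mathrm{int}}>B^K_{\rm pop}$.

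Let $g_k$ be the group in the assumption, and let $S_k$ be an optimal $K$-slate for $g_k$, so that $\sum_{j\in S_k}v^*_{j,k}=v^{*,K}_k>\sum_{j\in S^*}v^*_{j,k}$. Consider the rule that serves $S_k$ when $k\in\mathcal G(a)$ and $S^*$ otherwise. This rule depends only on the membership set, so for every intersection $\iota$ it serves a fixed slate, and each $b_\iota$ is at least that slate's value on $\iota$. Therefore
\[
B_{\mathrm{int}}\ge \alpha_k\sum_{j\in S_k}v^*_{j,k}+\sum_{j\in S^*}\bigl(v_j^*-\alpha_k v^*_{j,k}\bigr).
\]
This uses the decomposition $v_j^*=\alpha_k v^*_{j,k}+(1-\alpha_k)\,\E[v_{j,a}\mid\phi_k(a)=0]$ from total expectation. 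The right-hand side equals $B^K_{\rm pop}+\alpha_k\bigl(v^{*,K}_k-\sum_{j\in S^*}v^*_{j,k}\bigr)$.

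This increment is strictly positive provided $\alpha_k>0$. Positivity holds because if $\alpha_k=0$ then $\mathcal D_k$, and hence $v^*_{j,k}$, is undefined, so the assumption implicitly requires $\alpha_k>0$. This is the main point to justify carefully.

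\textbf{Conclusion.} Combining with $B^{\mathcal G}_R\ge B_{\mathrm{int}}$ gives the per-round gap
\[
B^{\mathcal G}_R-B^K_{\rm pop}\ge\alpha_k\Bigl(v^{*,K}_k-\sum_{j\in S^*}v^*_{j,k}\Bigr)>0,
\]
so $\Rtot(T)=\Omega(T)$. The matching upper bound $\Rtot(T)\le T\,B^{\mathcal G}_R\le TKV_{\max}$ holds trivially because welfare is nonnegative. Together these give the $\Theta(T)$ penalty.
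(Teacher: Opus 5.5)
Your proposal is correct and follows the paper's proof. It uses the same conditioning argument to get $\E[W_t\mid\mathcal F_t]\le B^K_{\rm pop}$, and the same witness policy (serve $g_k$'s optimal slate when $g_k$ is active, $S^*$ otherwise) to get the increment $\alpha_k\bigl(v^{*,K}_k-\sum_{j\in S^*}v^*_{j,k}\bigr)>0$. Your additions only make explicit what the paper leaves implicit: bounding each $b_\iota$ directly via total expectation instead of calling $B_{\mathrm{int}}$ the best group-aware welfare, noting that $\alpha_k>0$ is implicit in the assumption, and supplying the trivial $O(T)$ upper bound.
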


\begin{proof}
Because the policy is group-blind, the served slate $S_t$ is independent of the arriving user's valuations conditional on the history $\mathcal F_t$. Therefore, $\E[W_t\mid\mathcal F_t]=\E\bigl[\sum_{j\in S_t}v_j^*\mid\mathcal F_t\bigr]\le B^K_{\rm pop}$. Summing over all rounds yields the lower bound on total regret.

To prove $B_{\mathrm{int}} - B^K_{\rm pop} > 0$, let $g_k$ be a group satisfying Assumption~\ref{ass:nondegenerate}. Consider a group-aware policy that serves the group's optimal slate when $g_k$ is active, and the population-optimal slate $S^*$ otherwise. This policy achieves an expected welfare exceeding $B^K_{\rm pop}$ by exactly $\alpha_k\bigl(v_{k}^{*,K} - \sum_{j \in S^*} v_{j,k}^*\bigr) > 0$. As $B_{\mathrm{int}}$ is the maximum expected welfare achievable by any group-aware policy, we get $B_{\mathrm{int}} > B^K_{\rm pop}$.
\end{proof}

\begin{proposition}[Constant gap]
\label{prop:constantgap}
Let $r=\min\{K,n-K\}$. There is an instance on which every group-blind policy has $\Rtot(T)\ge\tfrac{rV_{\max}}{3}T$.
\end{proposition}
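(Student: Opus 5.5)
We construct a two-group instance with deterministic valuations and then invoke the first part of Proposition~\ref{prop:universal}. That part gives $\Rtot(T)\ge T\,(B^{\mathcal G}_R-B^K_{\rm pop})$ for every group-blind policy, and it does not use Assumption~\ref{ass:nondegenerate}. So it suffices to exhibit an instance with $B^{\mathcal G}_R-B^K_{\rm pop}\ge rV_{\max}/3$. The natural target is in fact a gap of $rV_{\max}/2$.

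\textbf{Construction.} Use the two-group setting $\mathcal A=\{M,m\}$, with $g_M,g_m$ partitioning the population and $\alpha_M=\alpha_m=1/2$. Partition the first $K+r$ items into three blocks:
\begin{itemize}
\item a shared block $C=\{1,\dots,K-r\}$;
\item a block $A'=\{K-r+1,\dots,K\}$ of size $r$;
\item a block $B'=\{K+1,\dots,K+r\}$ of size $r$.
\end{itemize}
This uses $K+r\le n$ items, which holds since $r\le n-K$. Any remaining items get value $0$ for everyone. Users in $g_M$ value every item of $C\cup A'$ deterministically at $V_{\max}$ and all others at $0$. Users in $g_m$ value every item of $C\cup B'$ at $V_{\max}$ and all others at $0$. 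These valuations lie in $[0,V_{\max}]$, as the model requires.

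\textbf{Computing the two benchmarks.} Each attribute type has exactly $K$ items of value $V_{\max}$, so $v^{*,K}_a=KV_{\max}$ for both types and $B^{\mathcal G}_R=KV_{\max}$.

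The population means are:
\begin{itemize}
\item $v_j^*=V_{\max}$ for $j\in C$;
\item $v_j^*=V_{\max}/2$ for $j\in A'\cup B'$;
\item $v_j^*=0$ otherwise.
\end{itemize}
The best $K$-slate takes all $K-r$ items of $C$ and then $r$ items of value $V_{\max}/2$. Hence $B^K_{\rm pop}=(K-r)V_{\max}+rV_{\max}/2$.

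The gap is therefore $B^{\mathcal G}_R-B^K_{\rm pop}=rV_{\max}/2\ge rV_{\max}/3$. Plugging this into Proposition~\ref{prop:universal} gives $\Rtot(T)\ge \tfrac{rV_{\max}}{2}T\ge\tfrac{rV_{\max}}{3}T$.

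\textbf{Main point to check.} The only delicate step is confirming that Proposition~\ref{prop:universal}'s bound $\E[W_t\mid\mathcal F_t]\le B^K_{\rm pop}$ needs no uniqueness assumption. Here the population-optimal slate is non-unique whenever $r<$ the number of half-valued items, so this matters. The argument only uses that a group-blind slate is conditionally independent of the arriving user's type, which gives $\E[W_t\mid\mathcal F_t]=\E[\sum_{j\in S_t}v_j^*\mid\mathcal F_t]$. Maximizing over $K$-slates then bounds this by $B^K_{\rm pop}$, with no uniqueness needed.

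The edge cases also hold. If $r=0$, the claim is trivial. If $K=n-K$, then $B'$ uses exactly the last $r$ items and the construction still fits.
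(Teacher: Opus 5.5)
Your proof is correct and is essentially the paper's argument. It uses the same three-block construction ($K-r$ items valued by everyone, plus $r$ items valued only by each group) together with the group-blind bound $\E[W_t\mid\mathcal F_t]\le B^K_{\rm pop}$ from Proposition~\ref{prop:universal}. The only difference is in the parameters: the paper uses masses $2/3$ and $1/3$, which gives exactly $rV_{\max}/3$ and a unique population-optimal slate, while your equal split gives the larger gap $rV_{\max}/2$; as you correctly check, the first part of Proposition~\ref{prop:universal} does not need uniqueness.
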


\begin{proof}
Take two groups of mass $2/3$ and $1/3$. $K-r$ common items are worth $V_{\max}$ to everyone; $r$ items are worth $V_{\max}$ only to the majority; $r$ items are worth $V_{\max}$ only to the minority; any extra items are worth $V_{\max}/10$. The group-aware optimum is $KV_{\max}$; the blind optimum is $(K-r)V_{\max}+\tfrac{2r}{3}V_{\max}$. The gap is $r V_{max}/3$.% when $n\ge2K$. %At $K=n$ regret is zero.
\end{proof}

\section{Group-Aware Approaches}
\label{sec:groupaware}
To resolve the structural penalty of group-blind algorithms, a platform must condition recommendations on user demographics. With disjoint groups, one can run one no-regret learner (e.g., UCB) per group. However, real-world users belong to multiple intersecting groups simultaneously. Maintaining an independent model for every demographic intersection requires up to $O(n 2^G)$ state and can slow finite-horizon learning on rare combinations. To address this, we study three approaches. All serve exactly $K$ distinct items, observe only the scores of served items, and compute upper confidence bounds at the item level as detailed in their respective algorithms.

% To address this, we study three approaches. Intersection fitting keeps separate statistics for every exact membership intersection. Two group-based methods keep $O((G+1)n)$ item statistics: Inner--Outer UCB, which routes each user to one awake group, and Group--Item UCB, which treats every (group, item) pair as an arm. All serve exactly $K$ distinct items, observe only the scores of served items, and use the top-$K$ UCB index~\eqref{eq:ucb} of Section~\ref{sec:model-recommendation} at the item level.

\subsection{Exact Intersection-Fitting}
\label{sec:intersection_fitting}
The most direct approach is to maintain a separate learner for every exact demographic intersection. Intersection-fitting shares no data across intersections and uses $O(nH)$ state, where $H \le 2^G$ is the number of intersections with a positive arrival rate.

\begin{algorithm}[ht] 
\caption{Intersection-Fitting UCB} 
\label{alg:isfit} 
\textbf{Initialize:} For each exact intersection $\iota_P$ on its first appearance, set $\hat v_{j,\iota_P}=0$ and $N_{j,\iota_P}=0$ for all $j\in[n]$.\\ 
\textbf{For} each round $t=0, 1, \dots, T-1$: 
\begin{enumerate}
\item \textbf{Observe Context:} The active set is $P=\mathcal G_{i_t}$; the user belongs to intersection $\iota_P$. Let $\tau_{\iota_P}(t)$ be the number of arrivals in $\iota_P$ up to and including the current round $t$.
\item \textbf{Index Computation:} For every item $j$, if $N_{j,\iota_P} = 0$, set $I_{j,\iota_P,t} = +\infty$. Otherwise, compute the index using the local intersection clock $\tau_{\iota_P}(t)$:
\[ I_{j,\iota_P,t} = \hat v_{j,\iota_P} + V_{\max}\sqrt{\frac{2\ln\max\{2,\tau_{\iota_P}(t)\}}{N_{j,\iota_P}}} \]
\item \textbf{Serve:} Recommend the $K$ items with the highest indices $I_{j,\iota_P,t}$ (ties broken by smaller item index).
\item \textbf{Observe Feedback:} Observe $Y_{j,t}$ for each served item $j\! \in \!S_t$.
\item \textbf{Update Statistics:} For each $j\in S_t$, update its statistics: $N_{j,\iota_P} \leftarrow N_{j,\iota_P} + 1$ and $\hat v_{j,\iota_P} \leftarrow \hat v_{j,\iota_P} + \frac{Y_{j,t} - \hat v_{j,\iota_P}}{N_{j,\iota_P}}$. All other intersections are unchanged.
\end{enumerate} 
\end{algorithm}

\begin{proposition}[Intersection fitting]
\label{prop:isfit}
For every $1\le K\le n$ and $T\ge1$, Algorithm~\ref{alg:isfit} satisfies, for every intersection $\iota_P$ with $\alpha_{\iota_P}>0$,
\[ R^{\rm shared}_{\iota_P}(T)\le V_{\max}\Bigl[4\sqrt{2\,nK\,\alpha_{\iota_P}T\ln(2T)}+2nK\Bigr], \]
and, for every group $g_k$ with $\alpha_k>0$,
\[ R_k(T)\le V_{\max}\Bigl[4\sqrt{2\,nKH_k\,\alpha_kT\ln(2T)}+2nKH_k\Bigr]. \]
Hence it achieves no intersection regret and no group regret. %, and at $K=n$ regret is exactly $0$.
\end{proposition}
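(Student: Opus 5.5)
The plan is to reduce everything to a standard top-$K$ semi-bandit UCB analysis run separately on each intersection, and then aggregate. Fix $\iota=\iota_P$ with $p_\iota>0$. Since the instance for $\iota$ is updated only on rounds with $\mathcal G_{i_t}=P$, and arrivals are i.i.d., the subsequence of $\iota$-rounds is an ordinary stochastic semi-bandit. On it, item $j$ yields i.i.d.\ rewards in $[0,V_{\max}]$ with mean $\mu_{\iota,j}$: conditional on $\mathcal G(a)=P$, $a\sim\mathcal D_P$ and $v_{ij}$ is independent across users. The instance uses its local clock $\tau=\tau_\iota(t)$. Hence, conditional on the number $M=\tau_\iota(T-1)$ of $\iota$-arrivals, the intersection regret equals $\E\bigl[\sum_{s=1}^{M}(b_\iota-\sum_{j\in S_s}\mu_{\iota,j})\bigr]$ over local rounds $s$. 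I will bound this by a function $f(M)$ that is concave in $M$, and then apply Jensen with $\E M=p_\iota T$.

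For the single-instance bound I will use the standard gap-free UCB argument. Let $S^\ast_\iota$ be an optimal slate. Define the good event at local round $s$ as $|\hat v_j-\mu_{\iota,j}|\le c_j(s):=V_{\max}\sqrt{2\ln\max\{2,s\}/N_j}$ for all $j$. Hoeffding plus a union bound over $j$ and over the possible values of $N_j\le s$ gives failure probability at most $2n s\cdot s^{-4}$. Summed over $s$, this contributes at most an $O(nK V_{\max})$ additive term, and I will absorb it together with the forced initial pulls (index $+\infty$ when $N_j=0$) into the $2nKV_{\max}$ term.

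On the good event, the top-$K$ selection rule gives $\sum_{j\in S_s}I_j\ge\sum_{j\in S^\ast_\iota}I_j\ge b_\iota$, so the per-round regret is at most $\sum_{j\in S_s}2c_j(s)$. Summing over rounds, each item $j$ contributes $\sum_{m=1}^{N_j(M)}2V_{\max}\sqrt{2\ln(2M)/m}\le 4V_{\max}\sqrt{2\ln(2M)\,N_j(M)}$. Cauchy--Schwarz with $\sum_j N_j(M)=KM$ then gives $4V_{\max}\sqrt{2\ln(2M)\,nKM}$. Using $M\le T$ inside the log and Jensen on $\sqrt{M}$ yields the first bound, $V_{\max}[4\sqrt{2nKp_\iota T\ln(2T)}+2nK]$.

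For the group bound, write $R_k(T)=\sum_{P\ni k}\E\bigl[\sum_t\mathbf 1\{\mathcal G_{i_t}=P\}(v_k^{*,K}-W_t)\bigr]$. The key identity is $v^{*}_{j,k}=\sum_{P\ni k}(p_{\iota_P}/\alpha_k)\mu_{\iota_P,j}$ together with "max of average $\le$ average of max". These give $\alpha_k v_k^{*,K}\le\sum_{P\ni k}p_{\iota_P}b_{\iota_P}$, hence
\[ R_k(T)\le\sum_{P\ni k}R^{\rm shared}_{\iota_P}(T). \]
Inserting the first bound and applying Cauchy--Schwarz over the $H_k$ intersections, $\sum_P\sqrt{p_{\iota_P}}\le\sqrt{H_k\sum_P p_{\iota_P}}=\sqrt{H_k\alpha_k}$, gives the stated bound with additive term $2nKH_k$. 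No-regret in both senses follows since $\sqrt{T\ln T}=o(T)$ and $\alpha$'s are fixed.

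I expect the main obstacle to be the bookkeeping of constants in the single-instance step, namely making the failure-event and initialization terms fit exactly within $2nK$ using the $\ln\max\{2,\tau\}$ clock. The local-clock conditioning, which makes the subsequence a genuine i.i.d.\ bandit under random $M$, is also delicate. I would handle the latter by conditioning on the arrival pattern, which is independent of the valuations.
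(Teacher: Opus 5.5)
Your proposal is correct and follows essentially the same route as the paper's proof. You condition on the intersection's arrival count and run the top-$K$ UCB optimism argument with a Hoeffding union bound over possible counts. The radii are summed via $\sum_{q}q^{-1/2}\le 2\sqrt{N}$, then Cauchy--Schwarz and Jensen. Finally, max-of-average $\le$ average-of-max gives $R_k(T)\le\sum_{\iota\ni k}R_\iota(T)$, followed by Cauchy--Schwarz over the $H_k$ intersections. The constant bookkeeping you flagged does close, as in the paper: the failure probabilities $2ns/\max\{2,s\}^4$ sum to at most $n$, so failure rounds and initialization rounds each contribute at most $nKV_{\max}$, for exactly $2nKV_{\max}$.
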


Proof in Appendix~\ref{app:isfit}. While intersection-fitting guarantees no-intersection regret (hence no-group regret), its theoretical upper bound scales with $O(\sqrt{H_k})$. Without a matching lower bound, this does not prove the algorithm is strictly slow in all environments; rather, it indicates that in the worst case, an intersection-fitting algorithm may be too slow, learning at a rate that depends exponentially on the number of groups $G$, fragmenting its exploration budget and failing to learn from higher group-level structures.

\subsection{Group-Based Pooling Heuristics}
\label{sec:overlapping}
To address this worst-case risk, we adapt two group-based heuristics that learn at the base-group level and pool data to serve users within each intersection. Rather than track every intersection, they maintain $G+1$ item tables---one per base group plus the always-active baseline $g_0$---using $O((G+1)n)$ item statistics.

 %Their main reduction updates experts using an observed outcome that evaluates each expert's loss; a single served item's score does not evaluate the other groups' proposals, so their guarantees do not transfer to our selected-item feedback.

\begin{algorithm}[!h] 
\caption{Inner--Outer UCB} 
\label{alg:hier} 
\textbf{Initialize:} For every group $g_k$ ($k\in\{0,\dots,G\}$): $\hat v_{j,k}=0$ and $N_{j,k}=0$ for all $j$. \\ 
\textbf{For} each round $t=0, 1, \dots, T-1$: 
\begin{enumerate}
\item \textbf{Observe Context:} The awake groups are those in $\mathcal G_{i_t}$.
\item \textbf{Index Computation:} For every awake group $g_k$ and item $j$, if $N_{j,k} = 0$, set $I_{j,k,t} = +\infty$. Otherwise, compute the UCB-style index using the global clock $t+1$:
\[ I_{j,k,t} = \hat v_{j,k} + V_{\max}\sqrt{\frac{2\ln\max\{2,t+1\}}{N_{j,k}}} \]
Let $S_{k,t} \subseteq [n]$ be the $K$-slate with the highest UCB-style indices for group $g_k$ (ties broken by smaller item index).
\item \textbf{Group Selection:} Compute the routing score $Q_{k,t} := \frac{1}{K}\sum_{j\in S_{k,t}} I_{j,k,t}$ for each awake group. Follow the awake group $g_{k^*}$ with the largest routing score $Q_{k,t}$, breaking ties by smaller group index.
\item \textbf{Serve:} Recommend the slate $S_t = S_{k^*,t}$.
\item \textbf{Observe Feedback:} Observe $Y_{j,t}$ for each served item $j\!\in \!S_t$.
\item \textbf{Update (followed group only):} For each $j\in S_t$, update the followed group's statistics: $N_{j,k^*} \! \leftarrow \!N_{j,k^*} \!+\! 1$ and $\hat v_{j,k^*} \leftarrow \hat v_{j,k^*} + \frac{Y_{j,t} - \hat v_{j,k^*}}{N_{j,k^*}}$. All other groups remain unchanged.
\end{enumerate} 
\end{algorithm}

\paragraph{Inner--Outer UCB} Our first heuristic, given in Algorithm~\ref{alg:hier}, builds on sleeping-bandit ideas \cite{kleinberg2010sleeping, blum2019advancing} and extends a technique from \cite{acharya2023oracle} to sequential recommendation. It maintains a learner for each group, augmented with an optimistic routing heuristic that determines which group's recommendation rule to follow when multiple are active. 
The inner step computes UCB-style indices to maintain an optimistic estimate for each item within a group, selecting the top $K$ items. For the outer selection, rather than implementing a separate AUER meta-learner \cite{kleinberg2010sleeping}, the algorithm uses an optimistic routing heuristic that picks the group whose proposed top-$K$ items have the highest average UCB-style index. Because a group's pooled empirical mean is a selection-weighted average of its constituent intersections' means, these group-level updates are biased, and we do not inherit the general no-regret guarantee of \citet{acharya2023oracle} in the worst case\footnote{A construction with linear regret is given in Appendix~\ref{app:counter}}. However, our experiments in Section~\ref{sec:experiments} show that our pooling framework still performs well when intersections inherit the valuation structure of their higher-level groups.

\begin{algorithm}[ht] 
\caption{Group--Item UCB} 
\label{alg:flat} 
\textbf{Initialize:} For every (group, item) pair: $\hat v_{j,k}=0$ and $N_{j,k}=0$.\\ 
\textbf{For} each round $t=0, 1, \dots, T-1$: 
\begin{enumerate}
\item \textbf{Observe Context:} The awake groups are those in $\mathcal G_{i_t}$.
\item \textbf{Index Computation:} For every awake group $g_k$ and item $j$, if $N_{j,k} = 0$, set $I_{j,k,t} = +\infty$. Otherwise, compute the UCB-style index using the global clock $t+1$:
\[ I_{j,k,t} = \hat v_{j,k} + V_{\max}\sqrt{\frac{2\ln\max\{2,t+1\}}{N_{j,k}}} \]
An item's score is its maximum UCB-style index over the awake groups, attained by its \emph{winning group}.
\item \textbf{Serve:} Recommend the $K$ distinct items with the highest scores (breaking ties by smaller group index, then smaller item index).
\item \textbf{Observe Feedback:} Observe $Y_{j,t}$ for each served item $j\!\in \!S_t$.
\item \textbf{Update (winning pair only):} For each $j\in S_t$, update only the statistics of its specific winning group $g_k$: $N_{j,k} \leftarrow N_{j,k} + 1$ and $\hat v_{j,k} \leftarrow \hat v_{j,k} + \frac{Y_{j,t} - \hat v_{j,k}}{N_{j,k}}$.
\end{enumerate} 
\end{algorithm}

\paragraph{Group--item UCB} 
Our second heuristic treats awake (group, item) pair as independent arms, allowing a single slate to combine items from different groups. The algorithm computes a UCB-style index for each item under every awake group. An item's overall score is its maximum 
% UCB-style 
index across these awake groups, attained by the winning group. The algorithm selects the $K$ distinct items with the highest overall scores and updates the winning (group, item) pair for each served item. As with Inner--Outer UCB, this algorithm lacks formal regret guarantees because a group's pooled empirical mean is a selection-weighted average of its constituent intersections' means (weighted by intersection-specific selection rates), rather than the true group  mean. Its indices therefore act as heuristic scores. The full algorithm is described in Algorithm~\ref{alg:flat}.

%routes each user to the awake group whose proposed slate has the highest average index and serves that slate; Group--item UCB lets every item be proposed by whichever awake group is most optimistic about it, so one slate can combine items favored by different groups. Note that a group's pooled mean is a selection-weighted average of its intersections' means (weighted by intersection selection rates), not the group's population mean, so its index acts as a heuristic score.

%We emphasize that these two group-based heuristics lack any theoretical regret guarantees. 
%\paragraph{Relationship between Inner--Outer UCB and Group--item UCB} At $K=1$, Inner--Outer UCB and Group--item UCB coincide: under the common fixed priority order and the global clock, both select the awake (group, item) pair with the highest index and update only that pair, so their trajectories are equal. %For this $K=1$ case a constructed instance with overlapping groups gives linear group regret (Appendix~\ref{app:counter}); we therefore rely on experiments to show when pooling succeeds and fails at larger $K$.

\subsection{Limits of Intersectional Learning}
Any recommendation policy serving this heterogeneous population faces an information-theoretic limit on how well it can serve any specific base group, dictated by how often that group arrives.

\begin{theorem}[Worst-case group lower bound]
\label{thm:lower_bound}
For any $1\le K<n$, any group frequency $\alpha_k\in(0,1]$ with $\alpha_kT\ge1$, and any algorithm using only selected-item feedback, some instance in the model has $R_k(T)\ge\frac{V_{\max}}{16e}\sqrt{\alpha_kT}$.
\end{theorem}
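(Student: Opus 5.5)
We plan a standard two-point (Le Cam) argument restricted to the rounds in which $g_k$ is active. The instance has a single non-universal group $g_k$ with $\Pr(\phi_k(a)=1)=\alpha_k$. Since the lower bound only needs to hold for \emph{some} instance, we make the complement's valuations identical under both hypotheses, so rounds where $g_k$ is asleep carry no information. Choose $K-1$ ``filler'' items worth exactly $V_{\max}$ (deterministically) to everyone; every optimal slate contains them. Choose two ``contested'' items $j_1,j_2$ for members of $g_k$, with all remaining items worth $0$. The valuations of $j_1,j_2$ are $V_{\max}\cdot\mathrm{Bernoulli}$: under $\mathcal H_1$ the means are $(\tfrac12+\varepsilon)V_{\max}$ and $\tfrac12V_{\max}$, and under $\mathcal H_2$ they are swapped. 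The regret then reduces to a two-armed bandit. On any round where $g_k$ is active, the served slate must choose which of the $K$ slots holds a contested item. If it omits the better item, or includes a worse filler, it loses at least $\varepsilon V_{\max}$ relative to $v_k^{*,K}=(K-1)V_{\max}+(\tfrac12+\varepsilon)V_{\max}$. Including both contested items forces dropping a filler, which costs at least $V_{\max}/2\ge \varepsilon V_{\max}$. This construction uses $n\ge K+1$, which holds since $K<n$. Hence
\[ R_k(T)\ \ge\ \varepsilon V_{\max}\,\E\bigl[N^{\rm bad}\bigr], \]
where $N^{\rm bad}$ counts active rounds in which the better contested item is not served.

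Next we bound information. Only active rounds in which $j_1$ or $j_2$ is served produce observations whose law differs between the hypotheses. Let $M$ denote the number of active rounds, with $\E[M]=\alpha_kT$. By the divergence decomposition for bandits, together with the chain rule over the filtration $\mathcal F_t$ (including algorithmic randomness and observed memberships), we get
\[ \mathrm{KL}(\mathbb P_1\,\|\,\mathbb P_2)\le \E_1[M]\cdot \mathrm{kl}\bigl(\tfrac12+\varepsilon,\tfrac12\bigr)+\dots\le c\,\varepsilon^2\alpha_kT \]
for $\varepsilon\le 1/4$, using $\mathrm{kl}(\tfrac12+\varepsilon,\tfrac12)\le 4\varepsilon^2$ together with the symmetric term. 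We then apply Bretagnolle--Huber to the event $A=\{$at least half the active rounds serve $j_1\}$:
\[ \mathbb P_1(A^c)+\mathbb P_2(A)\ge \tfrac12 e^{-\mathrm{KL}}. \]
On $A^c$ under $\mathcal H_1$, and on $A$ under $\mathcal H_2$, at least $M/2$ active rounds are bad. Averaging over the two hypotheses then gives
\[ \max_i R^{(i)}_k(T)\ \ge\ \varepsilon V_{\max}\cdot\tfrac{\alpha_kT}{2}\cdot\tfrac14 e^{-c\varepsilon^2\alpha_kT}. \]
We set $\varepsilon=\tfrac{1}{\sqrt{c'\alpha_kT}}$, which is at most $1/4$ when $\alpha_kT\ge1$ after adjusting constants. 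This yields $\frac{V_{\max}}{16e}\sqrt{\alpha_kT}$ once the constants are tracked; the factor $e$ comes from $e^{-1}$ at this choice of $\varepsilon$.

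The main obstacle is the randomness of $M$. Splitting ``half the active rounds'' is awkward because the event $A$ depends on $M$. We would handle this by conditioning on the whole arrival sequence, which has the same law under both hypotheses and can be revealed to the algorithm without loss. We then apply the argument for a fixed number of active rounds $m$, and finally use Jensen/convexity of $m\mapsto \sqrt m$-type bounds to pass to $\E[M]=\alpha_kT$. If concavity points the wrong way, we would instead restrict attention to the event $M\ge \alpha_kT/2$, or simply choose $\varepsilon$ as a function of $\alpha_kT$ and use the linear-in-$m$ form $\varepsilon m e^{-c\varepsilon^2 m}$ with $m\le$ its mean. The latter is cleaner since, for $\varepsilon$ fixed, the KL bound is linear in $\E[M]$ directly. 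A secondary check is that the slate-level loss accounting is correct for general $K$, since other slate configurations cost at least $\varepsilon V_{\max}$.
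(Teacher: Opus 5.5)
Your skeleton matches the paper's proof: $K-1$ deterministic fillers, a two-point family that differs only on $g_k$-arrivals, KL accumulated only on active rounds where a contested item is served, and Bretagnolle--Huber. Swapping the means of two contested items is harmless; the paper instead uses one item $u$ at $\tfrac12\pm\varepsilon$ against a fixed reference $w$ at $\tfrac12$.

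The gap is the trajectory-level event $A=\{\text{at least half the active rounds serve } j_1\}$. It causes two problems.

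\textbf{The random-$M$ issue is left unresolved.} The loss on $A^c$ is $\tfrac{\varepsilon V_{\max}}{2}\E_1[M\mathbf 1_{A^c}]$. A group-aware algorithm sees how many active rounds have occurred, so $A$ can be correlated with $M$ in either direction. Hence this loss is not $\tfrac{\alpha_kT}{2}\mathbb P_1(A^c)$. Your remark that the KL is linear in $\E[M]$ controls a different quantity. Conditioning on the arrival sequence is legitimate, but it leaves you needing $\E[Me^{-c\varepsilon^2M}]\gtrsim\alpha_kTe^{-c\varepsilon^2\alpha_kT}$. Since $m\mapsto me^{-bm}$ is not convex, Jensen does not deliver this. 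A binomial size-bias identity does, but at the price of an extra factor $e^{-c\varepsilon^2}$, which matters precisely when $\alpha_kT$ is near $1$.

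\textbf{The stated constant does not come out.} Your bound is $\max_iR_k^{(i)}\ge\tfrac{\varepsilon V_{\max}\alpha_kT}{8}e^{-c\varepsilon^2\alpha_kT}$. With $\varepsilon=1/\sqrt{c\alpha_kT}$ (the choice that produces your $e^{-1}$), this equals $\tfrac{V_{\max}\sqrt{\alpha_kT}}{8\sqrt c\,e}$. That reaches $\tfrac{V_{\max}}{16e}\sqrt{\alpha_kT}$ only if $c\le4$. But your own kl estimate ($4\varepsilon^2$ plus the symmetric term) already gives $c>4$. Moreover, keeping $\varepsilon\le\tfrac14$ for every $\alpha_kT\ge1$ forces $c\ge16$, which yields only $\tfrac{V_{\max}}{32e}\sqrt{\alpha_kT}$. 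The lost factor $2$ is exactly the $\tfrac M2$ built into $A$. So you get the right order, but not the theorem as stated.

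\textbf{The fix is the paper's route.} Apply Bretagnolle--Huber separately at each round $t$ to the event $\{j_1\in S_t\}$, conditional on $g_k$ being active at $t$. That activity indicator is independent of $\mathcal F_t$ and has the same law under both hypotheses, so conditioning on it leaves the law of the history unchanged. Each contested item is served on at most $\alpha_kT$ active rounds in expectation. Hence the KL between the two laws of the prefix is at most $\alpha_kT\bigl(\mathrm{kl}(\tfrac12+\varepsilon,\tfrac12)+\mathrm{kl}(\tfrac12,\tfrac12+\varepsilon)\bigr)\le16\varepsilon^2\alpha_kT$ for $\varepsilon\le\tfrac14$. Summing the per-round bounds gives $R_k^{(1)}+R_k^{(2)}\ge\tfrac{\varepsilon V_{\max}\alpha_kT}{2}e^{-16\varepsilon^2\alpha_kT}$, with no randomness in $M$ to handle. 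Taking $\varepsilon=\tfrac{1}{4\sqrt{\alpha_kT}}$, which is admissible because $\alpha_kT\ge1$, gives $\max_iR_k^{(i)}\ge\tfrac{V_{\max}}{16e}\sqrt{\alpha_kT}$.
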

Proof in Appendix~\ref{app:lower}. 
%This is a worst-case existence bound, not a matching minimax characterization in $n,K,G$ or an upper guarantee for any method.
The hard environments differ only in group $g_k$'s valuations, so information accrues only on the roughly $\alpha_kT$ rounds where $g_k$ is present. We emphasize this is a worst-case limit; it does not preclude algorithms from achieving lower regret on specific empirical instances.

\section{Experiments}
\label{sec:experiments}
We evaluate three dynamics of intersectional learning: welfare gains of demographic conditioning over group-blind baselines, conditions under which pooling improves finite-horizon performance over exact intersection-fitting, and how relative advantages shift across slate sizes and evaluation levels.

%\subsection{Datasets}
%\label{subsec:datasets}
\paragraph{Datasets} We evaluate two rating simulations to test our heuristics under different preference structures. The Goodreads environment evaluates performance when demographic intersections inherit structured preferences from constituent base groups (e.g., via Lancaster characteristics). For BoardGameGeek (BGG), eight games are selected per theme to construct the catalog, and reward distributions are fitted at the intersection level. BGG imposes no rule composing intersection preferences from constituent group tastes, allowing exact intersections to exhibit heterogeneous rankings completely different from their broader population. Both constructions fit the same conditional-reward model. In both datasets, $\mathcal{D}$ is the empirical distribution of membership patterns, and attribute vectors are the membership pattern ($a \in \{0,1\}^G$ with $\phi_k(a)=a_k$), so the experiments represent the special case where demographic identities act as the sole user features; thus $B^{\mathcal G}_R=B_{\mathrm{int}}$.

% \paragraph{Datasets} We evaluate two rating simulations to test our heuristics under different preference structures. The Goodreads environment evaluates performance when demographic intersections inherit structured preferences from constituent base groups (e.g., via Lancaster characteristics). The BoardGameGeek (BGG) environment evaluates performance when exact intersections within a group can have completely different, heterogeneous properties from the broader population.
% Eight games are selected per theme to make the catalog; the reward distributions are then fitted at the intersection level. BGG imposes no corresponding rule that composes intersection preferences from constituent group tastes, allowing intersection rankings to differ from those of their containing groups. 
% Both constructions fit the same conditional-reward model. In both datasets, $\mathcal{D}$ is the empirical distribution of membership patterns, and attribute vectors are the membership pattern ($a \in \{0,1\}^G$ with $\phi_k(a)=a_k$), so the experiments represent the special case where demographic identities act as the sole user features; thus $B^{\mathcal G}_R=B_{\mathrm{int}}$ there. We consider the two following datasets:

\begin{itemize}
\item \emph{Goodreads:}
We construct the base-group topology $\mathcal{G}$ directly from the observed co-membership network of $G=8$ Goodreads book clubs \cite{wan2018item, wan2019fine}. A user's active set $\mathcal{G}_i$ is their actual club membership profile, ensuring every evaluated intersection $\iota$ is a naturally occurring demographic pattern. Because Goodreads does not expose per-book ratings tied to specific club intersections at our required scale, we construct item valuations using a Lancaster latent characteristics model \cite{lancaster1966new}. Books are embedded as feature vectors representing their composition across genres, and user tastes are modeled as centroids of their active clubs. To ensure intra-genre variation, items follow fixed composition tiers (e.g., pure flagships vs. cross-genre hybrids). 
The evaluated $15$-item environment is formed by taking a fixed $\lceil 0.2 \times 72 \rceil = 15$ item uniformly sampled subset of the full catalog (catalog seed 2026), retaining original item order and recomputing benchmarks on the subset.
% We run two catalog sizes: $n=15$ (main text) and $n=72$ (Appendix~\ref{app:extended_graphs}). A smaller $15$-item environment is formed by taking a fixed $\lceil 0.2 \times 72 \rceil = 15$ item uniformly sampled subset (catalog seed 2026), retaining original item order and recomputing benchmarks on the subset.
\item \emph{BoardGameGeek:}
We use a public BoardGameGeek (BGG) database \cite{kaggle_bgg} containing hundreds of thousands of user ratings across $G=9$ overarching themes. To construct an environment designed to reward personalization without altering the underlying welfare metric, we filter the catalog to explicitly remove universally liked items. 
\end{itemize}

The remaining polarizing games are uniquely assigned to the theme where they over-index most strongly. 
The evaluated $15$-item environment is formed by taking a fixed $\lceil 0.2 \times 72 \rceil = 15$ item uniformly sampled subset of the catalog (catalog seed 2026), retaining item order and recomputing benchmarks on the subset.
% We run two catalog sizes: $n=15$ (main text) and $n=72$ (Appendix~\ref{app:extended_graphs}). A smaller $15$-item environment is formed by taking a fixed $\lceil 0.2 \times 72 \rceil = 15$ item uniformly sampled subset (catalog seed 2026), retaining original item order and recomputing benchmarks on the subset. 
Reward centers are the empirical mean ratings of users in each exact intersection (global fallback if empty); realized rewards are Gaussian draws around them, clipped to $[1,10]$ with a standard deviation floor of $0.5$; expected welfare is the post-clipping mean. Full catalog filtering and calibration details are provided in Appendix~\ref{app:simulation}.

%\subsection{Methodology}
%\label{subsec:methodology}
% Every experiment compares the same four policies, all defined for general $K$:
% \begin{enumerate}
% \item \textbf{Group-Blind UCB} (Algorithm~\ref{alg:group_blind_ucb}): a single pooled UCB instance over the full catalog, without demographic context.
% \item \textbf{Intersection-Fitting UCB} (Algorithm~\ref{alg:isfit}): an independent UCB instance for every realized intersection, with no data sharing across intersections.
% \item \textbf{Inner--Outer UCB} (Algorithm~\ref{alg:hier}): each user is routed to the awake group whose proposed top-$K$ slate has the highest average item index, and that slate is served; only the followed group is updated.
% \item \textbf{Group--Item UCB} (Algorithm~\ref{alg:flat}): joint UCB indices over all awake (group, item) pairs; only each served item's winning pair is updated.
% \end{enumerate}

\paragraph{Methodology} Each experiment compares the four policies defined earlier: Group-Blind UCB, Intersection-Fitting UCB, Inner-Outer UCB, and Group-Item UCB. We report cumulative group regret $R_k(T)$ and intersection regret $R^{\rm shared}_{\iota_P}(T)$, measured against the Section~\ref{sec:model-benchmarks-metrics} benchmarks evaluated at post-clipping means. Each curve is a single seed, estimating regret from realized rewards (benchmark minus served slate welfare), so individual curves can dip below zero from noise. Group regret can also be genuinely negative (Definition~\ref{def:group_regret}): by $R_k=\sum_{\iota\ni k}R_\iota-T\Gamma_k$, this happens when intersection-specific slates beat the group's best fixed slate by more than the intersections' own regret, without making intersection regret negligible. Overlapping group regrets cannot be summed to obtain platform regret, since users are counted once per group; sum intersection regrets instead. All policies see the same arrival sequence. At $K=1$ the two pooled heuristics coincide, yielding identical curves. For exact reproducibility, all plotted trajectories use arrival seed 3000, reward seed 1003000, and catalog seed 2026.

\subsection{Goodreads: Performance with Inherited Group Structure}
\label{subsec:goodreads_results}

The Goodreads environment explicitly constructs intersectional preferences as the average of their constituent base-club tastes. Because the intersections inherit this clear group structure, sharing data across overlapping groups is generally highly effective. 
As shown across all slate sizes (with $K=1$ and $5$ in Figures~\ref{fig:goodreads_k1} and \ref{fig:goodreads_k5}, and intermediate $K=3$ results showing consistent trends deferred to Appendix~\ref{app:extended_graphs}), Group-Blind UCB incurs strictly linear regret, focusing on the population average and starving minority clubs.
Intersection-Fitting guarantees sublinear regret, but its theoretical worst-case parameters scale poorly; its fragmented exploration budget risks slow learning at finite horizons, which we observe empirically, especially on rare intersections.

% ==========================================
% GOODREADS n=15 (K=1)
% ==========================================
\begin{figure*}[h]
    \centering
    \begin{subfigure}{\textwidth}
    \centering
        \includegraphics[width=0.9\textwidth]{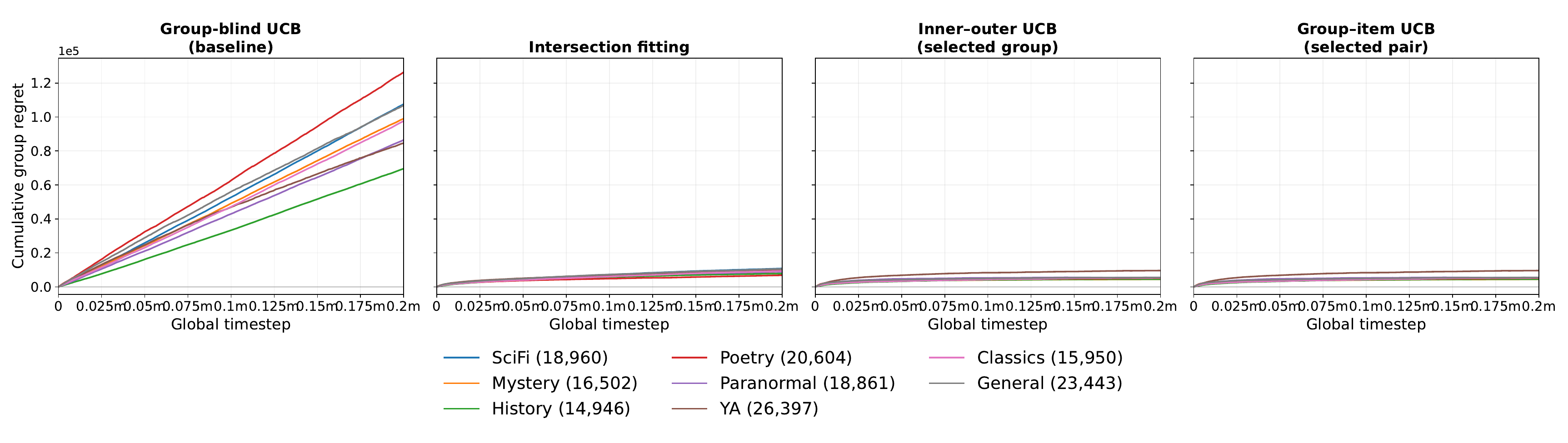}
        \caption{Base Clubs}
    \end{subfigure}
    % \vspace{0.2cm}
    
    \begin{subfigure}{\textwidth}
    \centering
        \includegraphics[width=0.9\textwidth]{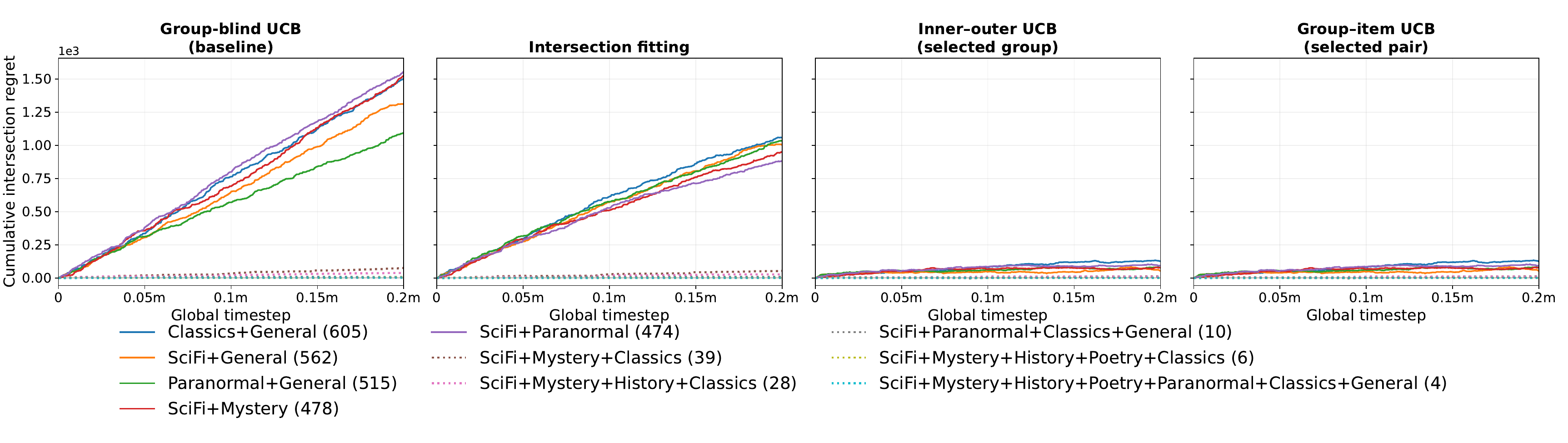}
        \caption{Common (Solid Lines) and Rare (Dotted Lines) Intersections}
    \end{subfigure}
    % \vspace{0.2cm}

    % \begin{subfigure}{\textwidth}
    %     \includegraphics[width=\textwidth]{outputs/goodreads/n=15/Goodreads_15items_200000rounds_K1_rare_regret.pdf}
    %     \caption{Rare Intersections}
    % \end{subfigure}
    
   \caption{Goodreads primary catalog ($n=15$) at slate size \textbf{$K=1$}. At the group level, Intersection-Fitting and both pooling heuristics (Group--Item UCB and Inner--Outer UCB) achieve comparable low cumulative regret. At the intersection level, common demographics (solid lines) and rare demographics (dotted lines) are plotted together. Across both traffic tiers, the pooling heuristics achieve strictly lower cumulative regret than exact Intersection-Fitting.}
    \label{fig:goodreads_k1}
\end{figure*}

% ==========================================
% GOODREADS n=15 (K=5)
% ==========================================
\begin{figure*}[h]
    \centering
    \begin{subfigure}{0.9\textwidth}
    \centering
        \includegraphics[width=\textwidth]{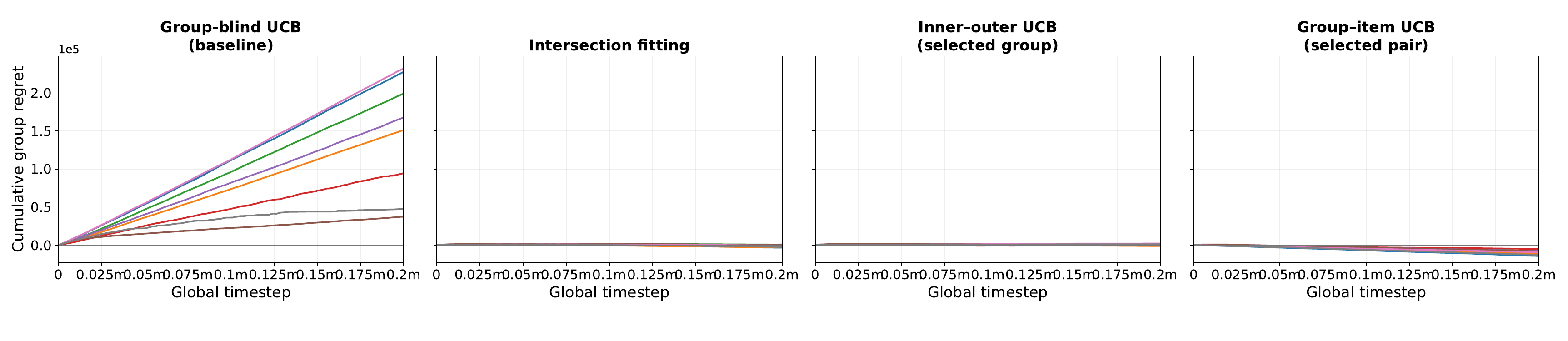}
        \caption{Base Clubs}
    \end{subfigure}
    % \vspace{0.2cm}
    
    \begin{subfigure}{0.9\textwidth}
    \centering
        \includegraphics[width=\textwidth]{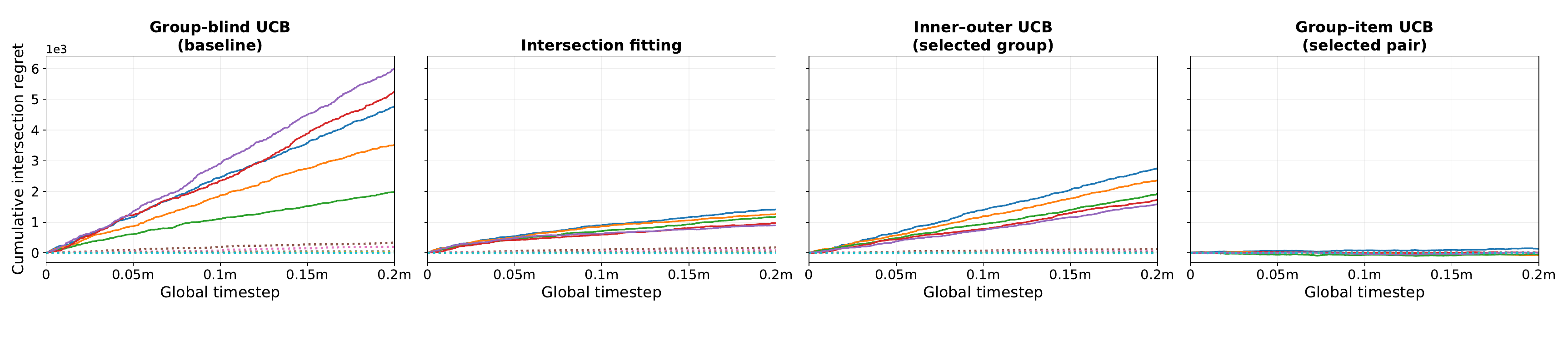}
        \caption{Common (Solid Lines) and Rare (Dotted Lines) Intersections}
    \end{subfigure}
    % \vspace{0.2cm}

    % \begin{subfigure}{\textwidth}
    %     \includegraphics[width=\textwidth]{outputs/goodreads/n=15/Goodreads_15items_200000rounds_K5_rare_regret.pdf}
    %     \caption{Rare Intersections}
    % \end{subfigure}

\caption{Goodreads primary catalog ($n=15$) at slate size \textbf{$K=5$}. Group-Blind UCB incurs linear regret across all panels. Group--Item UCB achieves the lowest regret across groups and intersections, whereas Inner--Outer UCB's cumulative regret exceeds Intersection-Fitting on common intersections.}
    \label{fig:goodreads_k5}
\end{figure*}

The group-based heuristics use this inherited structure, but their performance diverges at larger slates. Group--Item UCB achieves the lowest regret across the board: by evaluating joint (group, item) indices, it identifies optimal items efficiently and drives group regret near zero (and even slightly negative at $K=5$, as intersection-specific slates out-perform the group baseline).

Conversely, Inner--Outer UCB degrades at larger slate sizes. While efficient at $K=1$, its regret climbs as $K$ increases. At $K=3$ (Appendix A), Inner--Outer UCB still achieves comparable or lower regret than Intersection-Fitting on common intersections, but by $K=5$, it eventually loses to Intersection-Fitting at $T=200{,}000$. Because Inner--Outer UCB is restricted to selecting one group's entire proposed slate, it does not explicitly combine top indices across different groups the way Group--Item UCB does. This structural rigidity is a plausible penalty at $K=5$; even when data possesses inherited structure, pooling heuristics are not universally effective.

\subsection{BoardGameGeek: Performance with Heterogeneous Cell Preferences}
\label{subsec:bgg_results}

Unlike the Goodreads construction, the BoardGameGeek empirical calibration imposes no structural rule linking an intersection's preferences to its constituent groups. An exact intersection of users can have completely different optimal rankings than the broader themes to which they belong. 

\begin{figure*}[!t]
    \centering
    \begin{subfigure}{0.9\textwidth}
        \includegraphics[width=\textwidth]{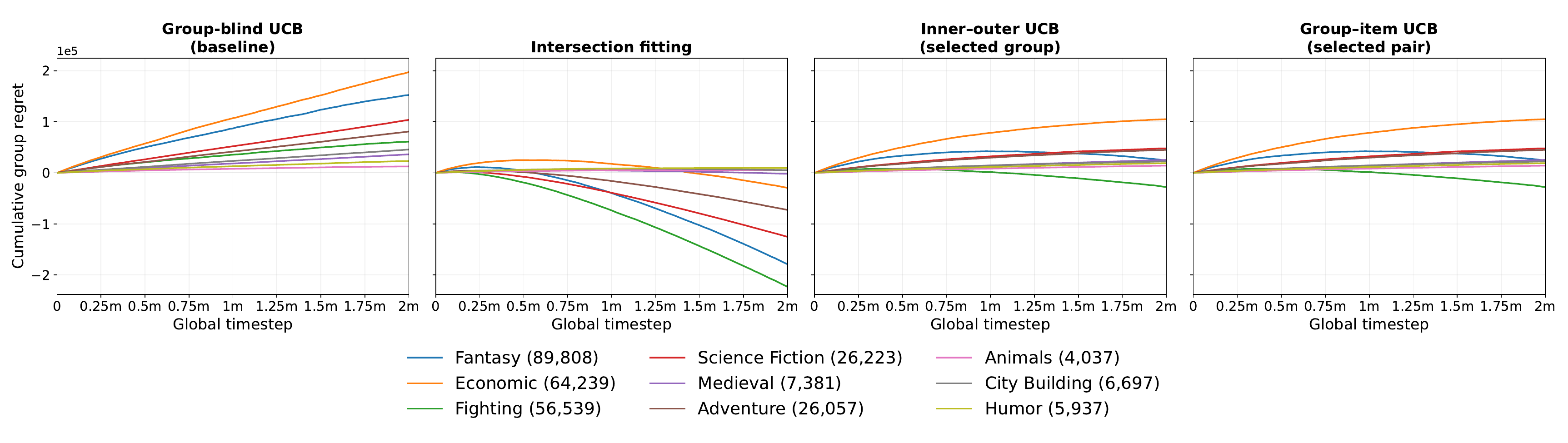}
        \caption{Base Themes: Group-level regret.}
    \end{subfigure}
    % \vspace{0.2cm}
    
    \begin{subfigure}{0.9\textwidth}
        \includegraphics[width=\textwidth]{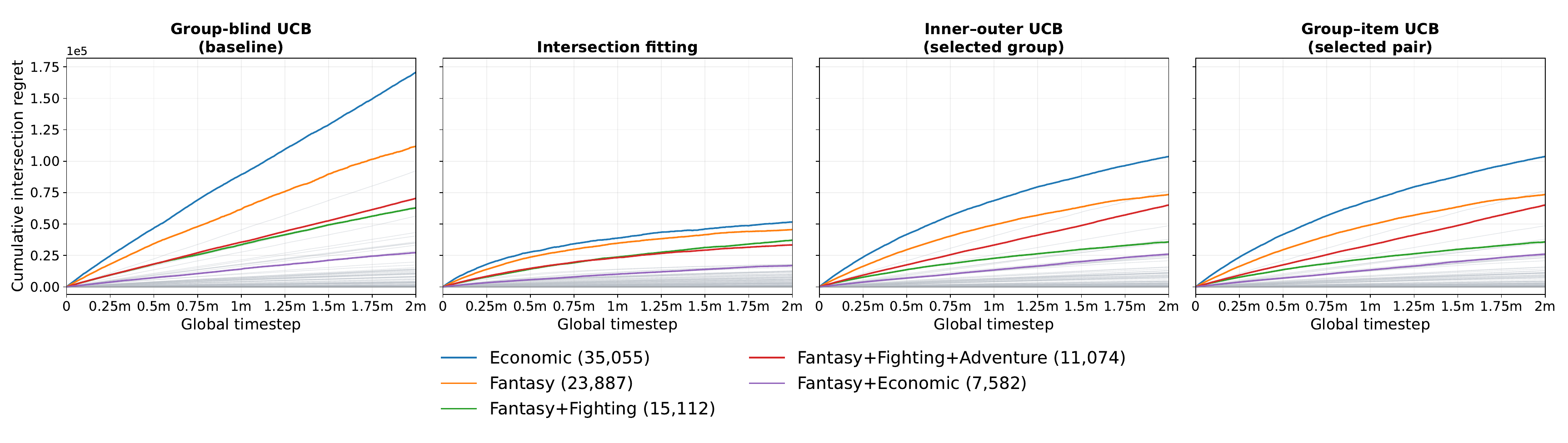}
        \caption{Intersections: Intersection-level regret.}
    \end{subfigure}
    
\caption{BoardGameGeek primary catalog ($n=15$) at slate size \textbf{$K=1$}. Intersection-Fitting achieves negative group-level regret by recommending intersection-specific slates that outperform the group's best fixed slate. Both pooling heuristics (Group--item UCB and Inner--Outer UCB) result in higher cumulative regret than Intersection-Fitting across the groups and intersections.}
    \label{fig:bgg_k1}
\end{figure*}

\begin{figure*}[!h]
    \centering
    \begin{subfigure}{0.9\textwidth}
        \includegraphics[width=\textwidth]{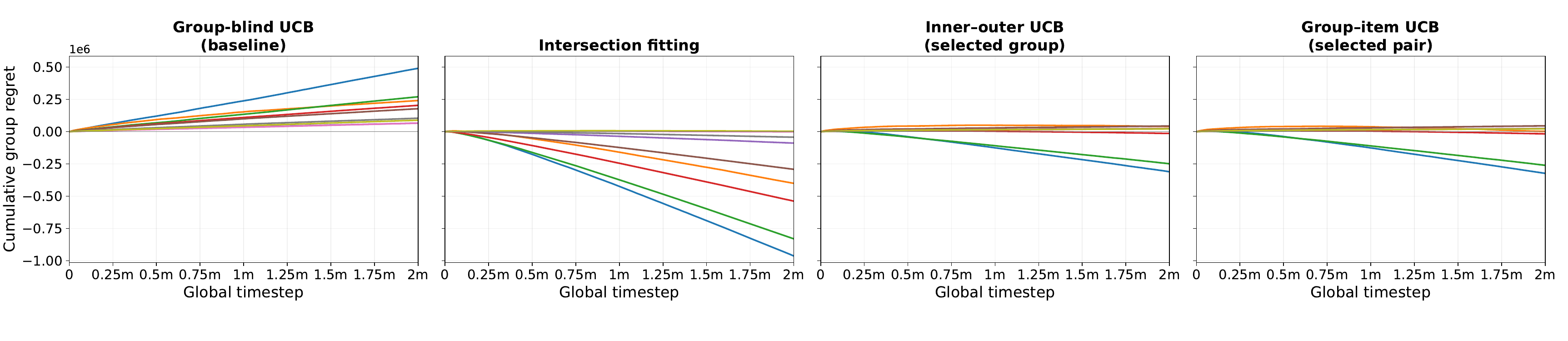}
        \caption{Base Themes: Group-level regret.}
    \end{subfigure}
    
    \begin{subfigure}{0.9\textwidth}
        \includegraphics[width=\textwidth]{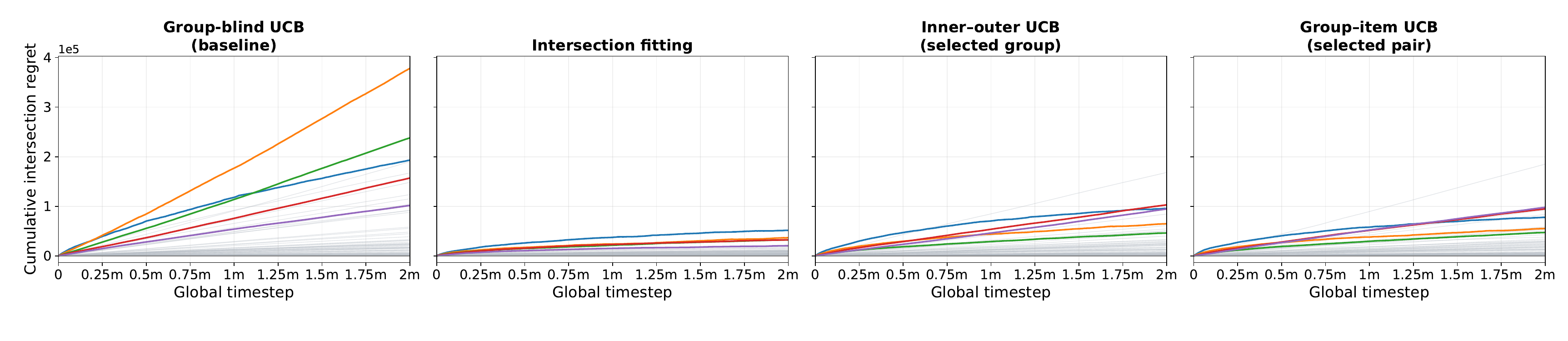}
        \caption{Intersections: Intersection-level regret.}
    \end{subfigure}
    
\caption{BoardGameGeek primary catalog ($n=15$) at slate size \textbf{$K=5$}. While the pooling heuristics also achieve negative group-level regret, Intersection-Fitting drives group regret to be more negative and achieves lower cumulative intersection regret across all displayed panels.}
    \label{fig:bgg_k5}
\end{figure*}

When intersections do not inherit this clean group structure, group-based heuristics lose their advantage. As shown across all slate sizes (with intermediate $K=3$ results demonstrating consistent performance trends deferred to Appendix~\ref{app:extended_graphs}), Inner--Outer UCB and Group--Item UCB fail to consistently outperform Intersection-Fitting. 
While pooling data across awake groups avoids maintaining an exponential state space, these results are consistent with harmful pooling across heterogeneous intersections: updating a pooled group model by whichever demographic intersection happened to be active systematically skews the UCB-style indices away from the optimal for a specific, atypical intersection. 
Consequently, Intersection-Fitting frequently matches or outperforms the heuristics by maintaining independent, unbiased estimates for each heterogeneous demographic intersection.

\subsection{Robustness to Randomness}

To ensure stability against the randomness of user arrivals and stochastic reward realizations, we evaluate all algorithms across five independent seeds. Across both the Goodreads and BoardGameGeek environments, the empirical regret concentrates around its expectation. At the final timestep, the relative standard deviation typically remains under $5\%$ of the cumulative regret magnitude for the baseline policies. When relative standard deviation appears mathematically large, the absolute standard deviation remains negligible (e.g., $\sigma < 35$ over $200{,}000$ rounds); this is the result of a near-zero regret denominator. Full details of the multi-seed variance analysis are deferred to Appendix~\ref{appendix:variance_robustness}.

\section{Discussion and Conclusion}
\label{sec:conclusion}
Group-blind recommendation (which excludes all current-user information in this model) fails structurally, not algorithmically. As demonstrated by our universal structural gap (Proposition~\ref{prop:universal}), no group-blind policy can overcome the penalty of optimizing for the "average" user, yielding strict linear total regret.

Our theoretical and empirical results yield three core takeaways for sequential recommendation. First, group-blind policies incur linear total regret: optimizing for the "average" user pays a linear welfare penalty, producing a $\Theta(T)$ total regret penalty for every group-blind policy at general $K$ under Assumption~\ref{ass:nondegenerate}. Second, while intersection-fitting is theoretically safe, it can be practically slow. Maintaining an independent learner per demographic intersection guarantees no-group regret, but the theoretical upper bound scales poorly with the number of intersections, which risks prohibitively slow finite-horizon learning. Finally, group-aware heuristics trade theoretical guarantees for empirical speed. We adapt two pooling heuristics (Inner--Outer and Group--Item UCB) that lack regret guarantees. When intersections inherit structured preferences from their base groups (Goodreads), pooling data can accelerate learning, though effectiveness varies: Group--Item UCB consistently improves sample efficiency, while Inner--Outer UCB's performance degrades at larger slate sizes. Furthermore, both heuristics fail to outperform exact intersection-fitting on BoardGameGeek, a contrast consistent with demographic intersections exhibiting heterogeneous properties from their constituent groups.

\paragraph{Limitations.} There are several limitations to our current analysis. 
First, while the universal structural gap holds for general $K$, positive theoretical guarantees cover only exact intersection-fitting. The pooling heuristics have no general upper guarantee; in fact, at $K=1$, both algorithms are mathematically equivalent and a constructed instance yields linear group regret (Appendix~\ref{app:counter}). Consequently, we rely on empirical evaluation for these shared-data architectures. Second, in general, no group-level algorithm, including ours, achieves sublinear total regret. Total regret decomposes as $\Rtot(T) = T(B_R^\mathcal{G} - B_{\mathrm{int}}) + \sum_\iota R_\iota(T)$. There remains an irreducible cost $T(B_R^\mathcal{G} - B_{\mathrm{int}})$ (zero in our experiments) associated with missing within-intersection information (i.e., not observing each user's exact, fully realized attribute vector). Finally, the structural gap we identify requires Assumption~\ref{ass:nondegenerate} to hold with a non-trivial margin. Real-world catalogs may have smaller preference divergence across demographic segments than our theoretical valuations.

\paragraph{Future work.} Our findings open several directions for future work. A primary technical extension would be identifying conditions under which pooled heuristics have theoretical guarantees, and tightening intersection fitting's dependence on $H_k$ toward the $\sqrt{\alpha_{k}T}$ lower bound. Finally, exploring strategic item response, continuous individual personalization, and hybrid architectures that use base-group pooling strictly as a cold-start prior until exact intersection counts clear the exploration bound---rather than treating group-specific and population-shared preferences as the two isolated extremes analyzed in Sections~\ref{sec:group_blind} and~\ref{sec:groupaware}---remain important next steps for modeling representation in sequential recommendation.

\section*{Acknowledgment}
Ziani gratefully acknowledges the support of the National Science Foundation through awards CAREER-2336236 and III-2504990.

%% Bibliography
\bibliographystyle{ACM-Reference-Format}
\bibliography{references} 

%% APPENDIX  (4 pages max)
\appendix
\input{arxiv_appendix}

\end{document}

%% file: arxiv_appendix.tex
\section{Additional Experimental Results}
\label{app:extended_graphs}
%\subsection{Omitted Slate Sizes ($n=15$)}
Figures~\ref{fig:goodreads_k3} and~\ref{fig:bgg_k3} present the $K=3$ slate evaluations for the Goodreads and BoardGameGeek catalogs, showing the same relative performance trends observed in the main text.

% ==========================================
% GOODREADS n=15 (K=3)
% ==========================================
\begin{figure*}[ht]
    \centering
    \begin{subfigure}{\textwidth}
        \includegraphics[width=\textwidth]{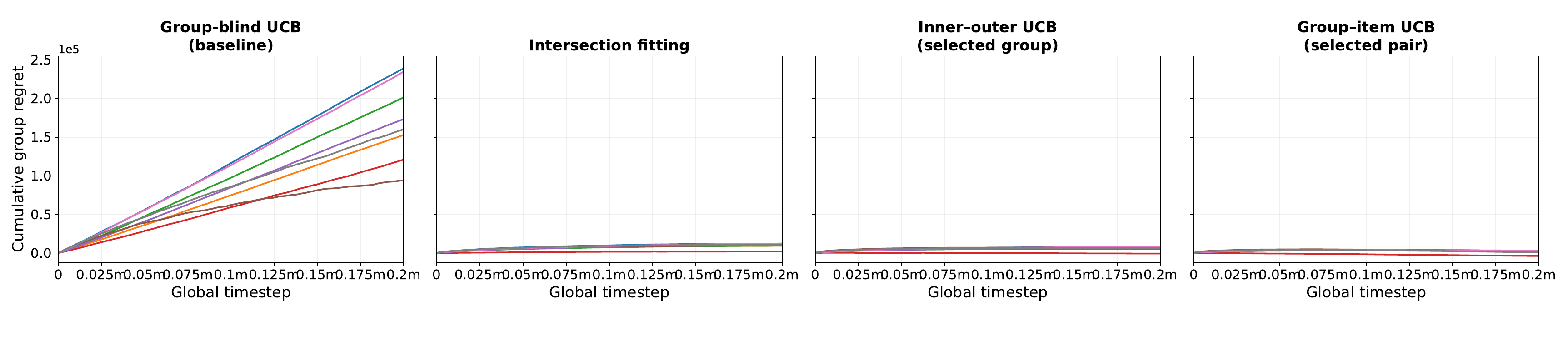}
        \caption{Base Clubs}
    \end{subfigure}
    % \vspace{0.2cm}
    
    \begin{subfigure}{\textwidth}
        \includegraphics[width=\textwidth]{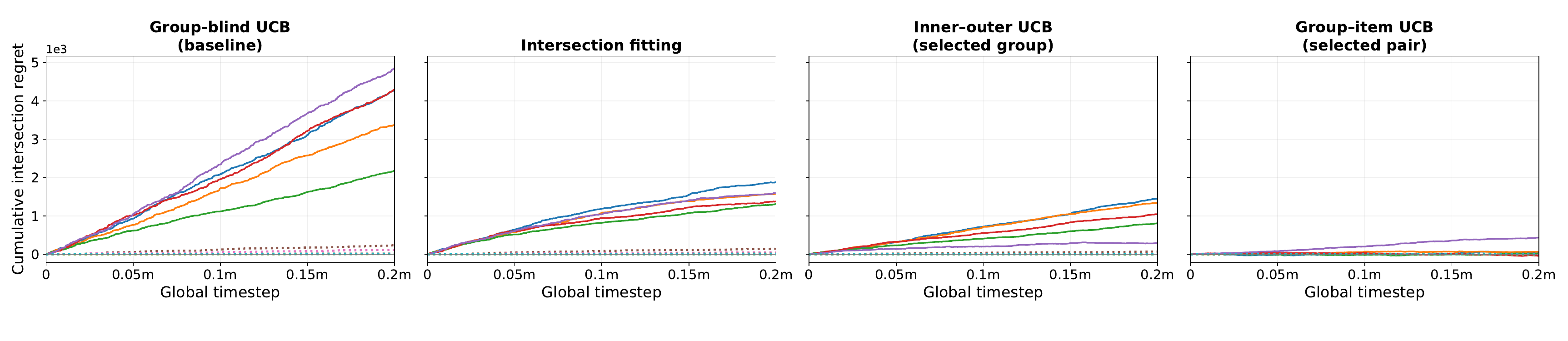}
        \caption{Common (Solid Lines) and Rare (Dotted Lines) Intersections}
    \end{subfigure}
    \caption{Goodreads primary catalog ($n=15$) at slate size \textbf{$K=3$}. On rare intersections (dotted lines), Group--item UCB and Inner--Outer UCB achieve lower regret than Intersection-Fitting. On common intersections (solid lines), Group--item UCB maintains the lowest regret, while Inner--Outer UCB achieves comparable cumulative regret to Intersection-Fitting.}
    \label{fig:goodreads_k3}
\end{figure*}

Figure~\ref{fig:bgg_k3} presents the corresponding $K=3$ evaluation for the primary BoardGameGeek catalog.

% ========================================== 
% BGG n=15 (K=3) 
% ==========================================
\begin{figure*}[ht]
    \centering
    \begin{subfigure}{\textwidth}
        \includegraphics[width=\textwidth]{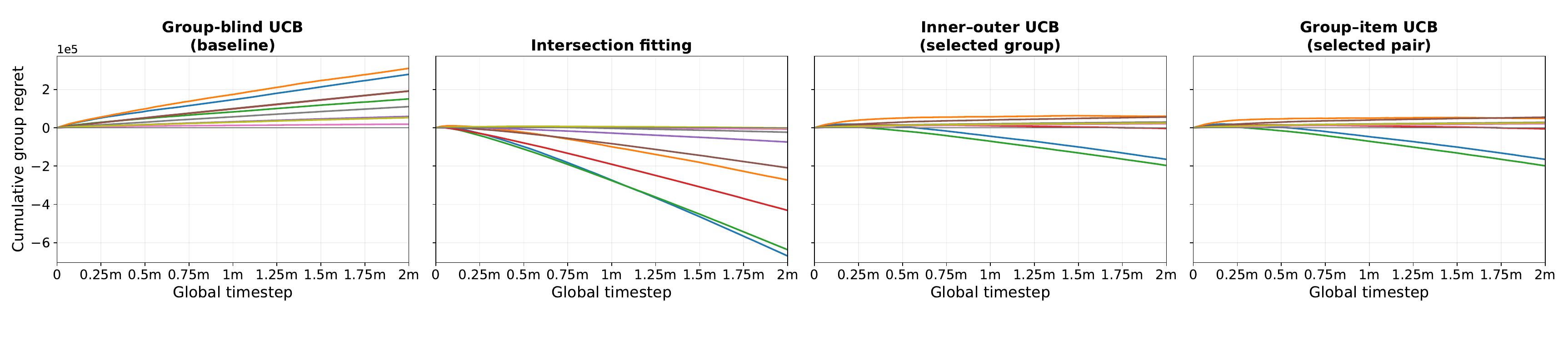}
        \caption{Base Themes: Group-level regret.}
    \end{subfigure}
    
    \begin{subfigure}{\textwidth}
        \includegraphics[width=\textwidth]{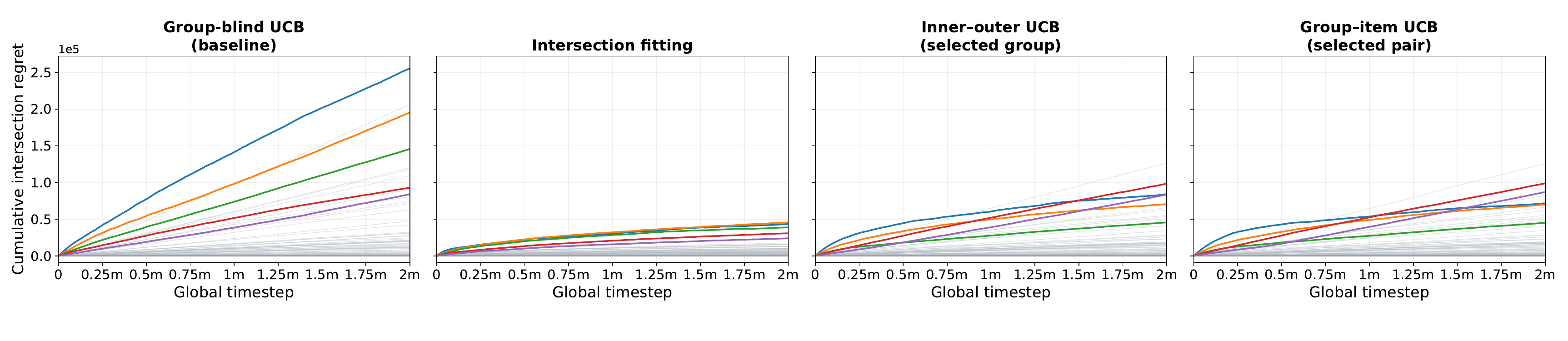}
        \caption{Intersections: Intersection-level regret.}
    \end{subfigure}
    
\caption{BoardGameGeek primary catalog ($n=15$) at slate size \textbf{$K=3$}. Intersection-Fitting achieves the lowest cumulative regret at both the group and intersection levels. Group--Item UCB and Inner--Outer UCB have distinct regret trajectories that end strictly higher than Intersection-Fitting across the panels.}
    \label{fig:bgg_k3}
\end{figure*}

\subsection{Extended Catalog Scaling ($n=72$)}
To verify that the dynamics observed in the primary $15$-item catalog hold at larger scales, we evaluate identical algorithmic setups on an extended catalog of $n=72$ items.

The core findings scale consistently, with one exception regarding Inner--Outer UCB. In the Goodreads environment (where intersections inherit group structure), Group--Item UCB continues to achieve lower regret than Intersection-Fitting across all $K$ (Figures~\ref{fig:goodreads_k1}, \ref{fig:goodreads_k3}, and \ref{fig:goodreads_k5}). While Inner--Outer UCB did not perform as well as Intersection-Fitting at $K=5$ on the $15$-item catalog, the larger $72$-item catalog reverses this ordering (Figure~\ref{fig:gr72_k5}). The expanded action space magnifies Intersection-Fitting's exploration fragmentation, allowing Inner--Outer UCB to maintain its advantage on common intersections.

In the BoardGameGeek environment (where intersections exhibit heterogeneous preferences), the penalty resulting from pooling still dominates. As in Figures~\ref{fig:bgg72_k1}, \ref{fig:bgg72_k3}, and \ref{fig:bgg72_k5}, Inner--Outer UCB and Group--Item UCB continue to suffer from the skewed UCB-style indices caused by active-group updates, allowing Intersection-Fitting to consistently match or outperform the heuristics exactly as it did at $n=15$.

\begin{figure*}[h]
    \centering
    \includegraphics[width=\textwidth]{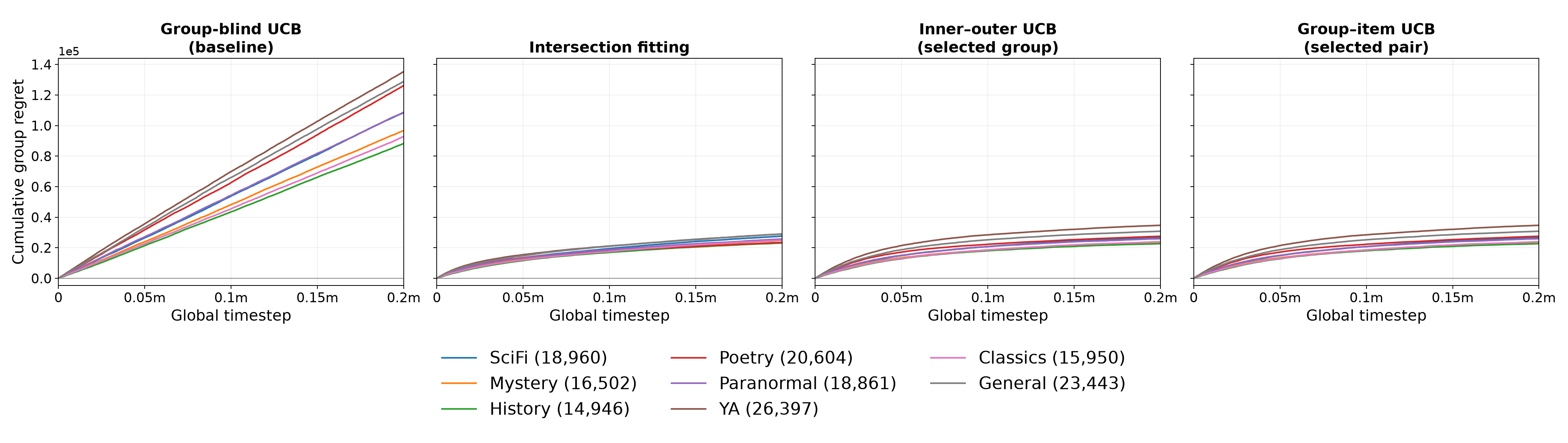}
    \includegraphics[width=\textwidth]{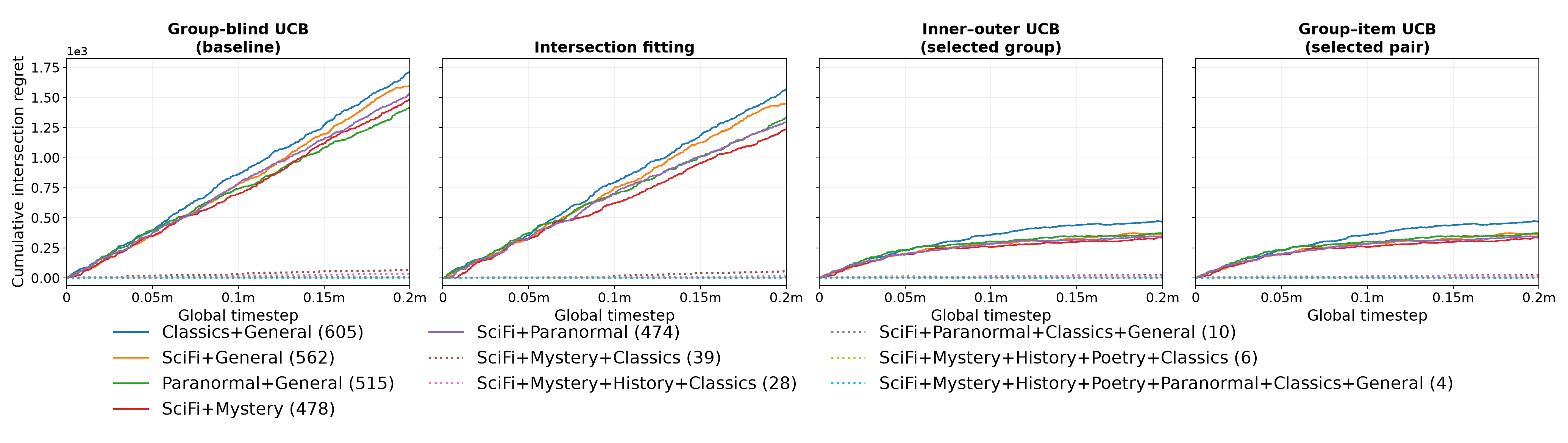}
    \caption{Goodreads extended catalog ($n=72$) at slate size $K=1$. At this slate size, Group--Item UCB and Inner--Outer UCB are algorithmically equivalent. Both pooled heuristics efficiently leverage the inherited group structure to achieve strictly lower cumulative regret than Intersection-Fitting across both group and intersection levels.}
    \label{fig:gr72_k1}
\end{figure*}

\begin{figure*}[h]
    \centering
    \includegraphics[width=\textwidth]{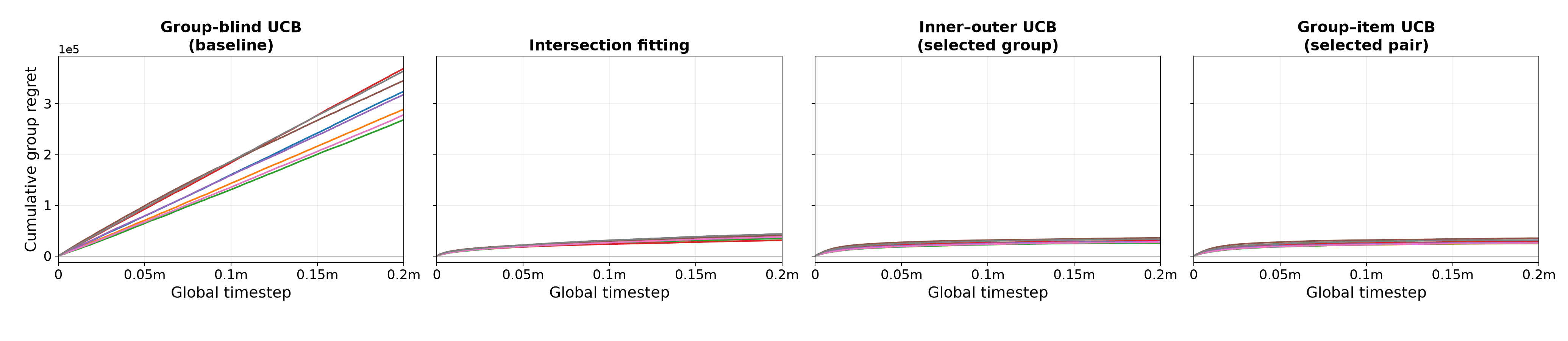}
    \includegraphics[width=\textwidth]{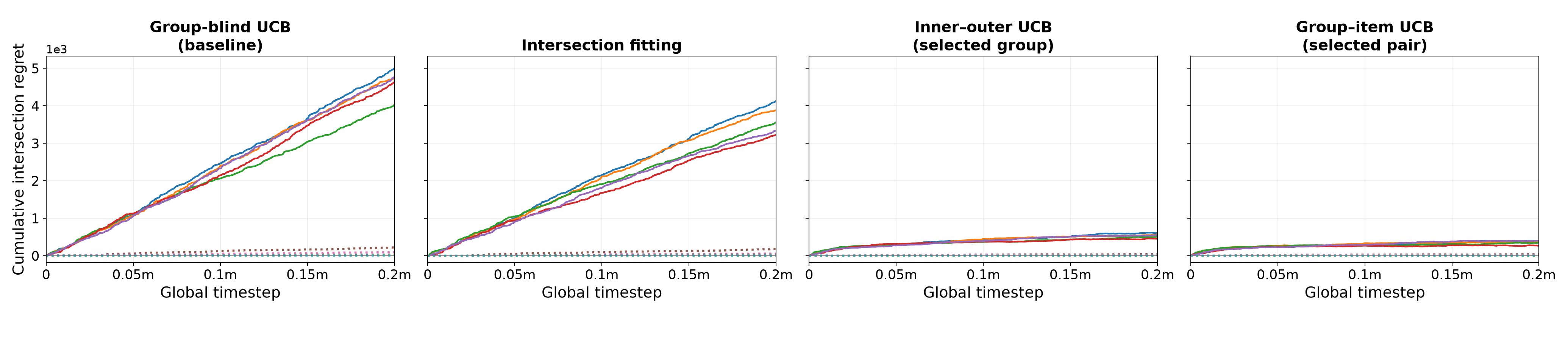}
    \caption{Goodreads extended catalog ($n=72$) at slate size $K=3$. Group--Item UCB achieves the lowest regret. Unlike the $15$-item catalog where Inner--Outer UCB and Intersection-Fitting were comparable at $K=3$, the expanded $72$-item catalog separates Intersection-Fitting's budget enough that Inner--Outer UCB establishes an advantage on common intersections.}
    \label{fig:gr72_k3}
\end{figure*}

\begin{figure*}[h]
    \centering
    \includegraphics[width=\textwidth]{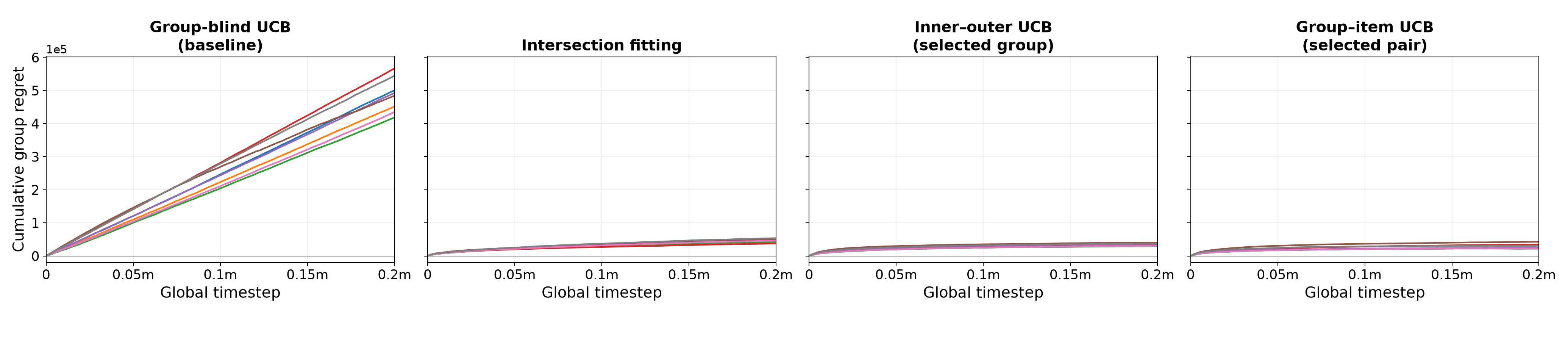}
    \includegraphics[width=\textwidth]{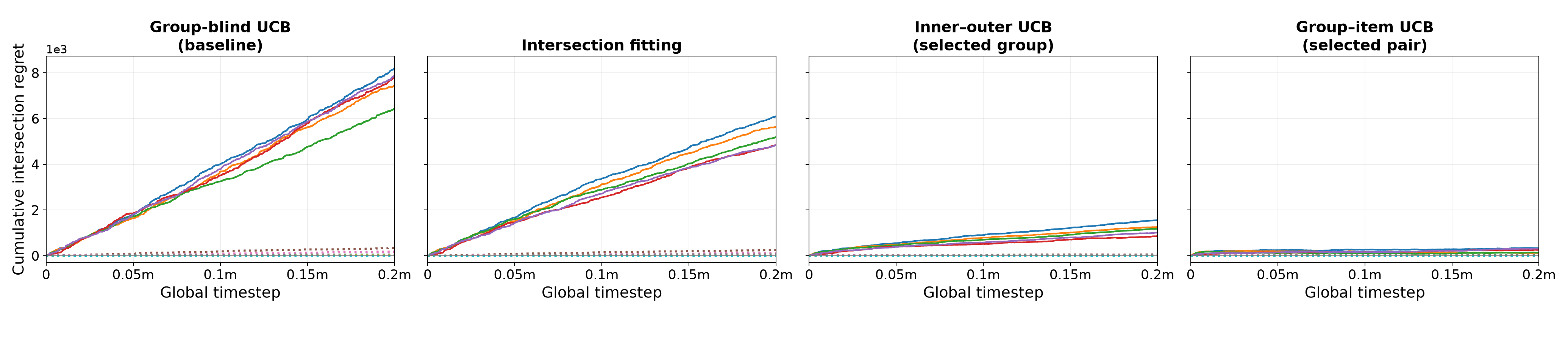}
    \caption{Goodreads extended catalog ($n=72$) at slate size $K=5$. This is a reversal of the dynamic observed in the $15$-item catalog: the expanded action space penalizes Intersection-Fitting's lack of data sharing more, allowing Inner--Outer UCB to maintain lower cumulative regret on common intersections. Group--Item UCB remains the most efficient policy.}
    \label{fig:gr72_k5}
\end{figure*}

\begin{figure*}[h]
    \centering
    \includegraphics[width=\textwidth]{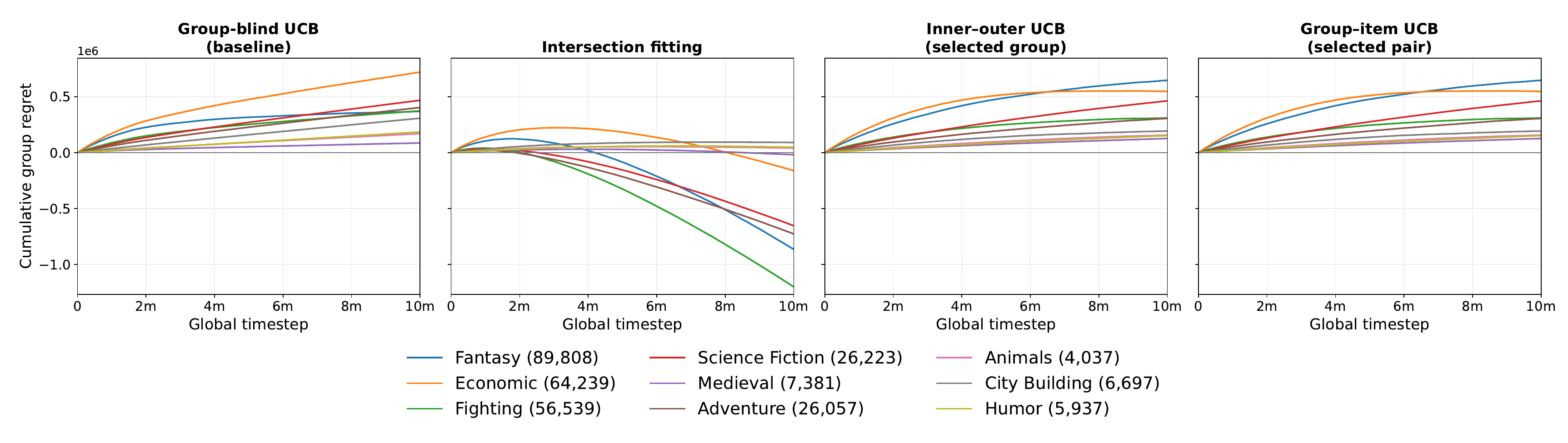}
    \includegraphics[width=\textwidth]{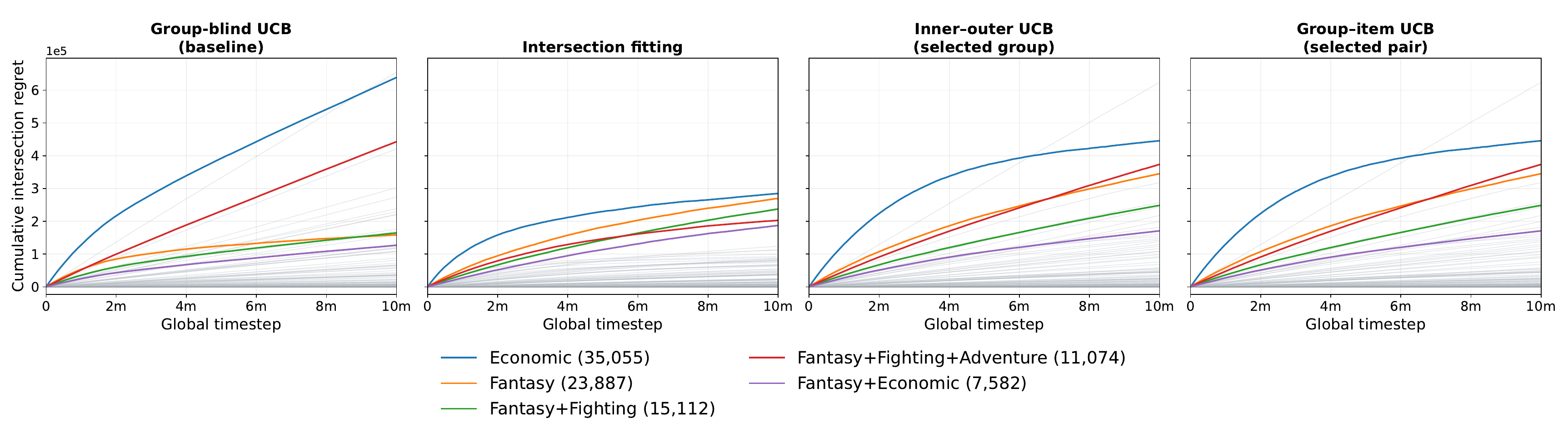}
    \caption{BoardGameGeek extended catalog ($n=72$) at slate size $K=1$. Because the empirical intersections exhibit heterogeneous preferences that do not smoothly compose from their base themes, pooling data actively harms learning. Intersection-Fitting avoids this cross-group interference, driving group regret deeply negative and achieving the lowest cumulative intersection regret.}
    \label{fig:bgg72_k1}
\end{figure*}

\begin{figure*}[h]
    \centering
    \includegraphics[width=\textwidth]{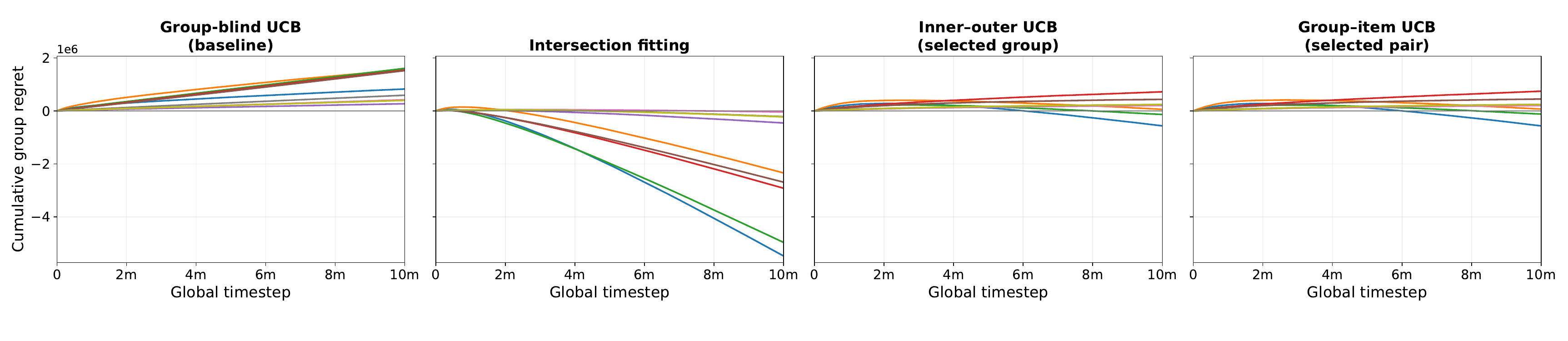}
    \includegraphics[width=\textwidth]{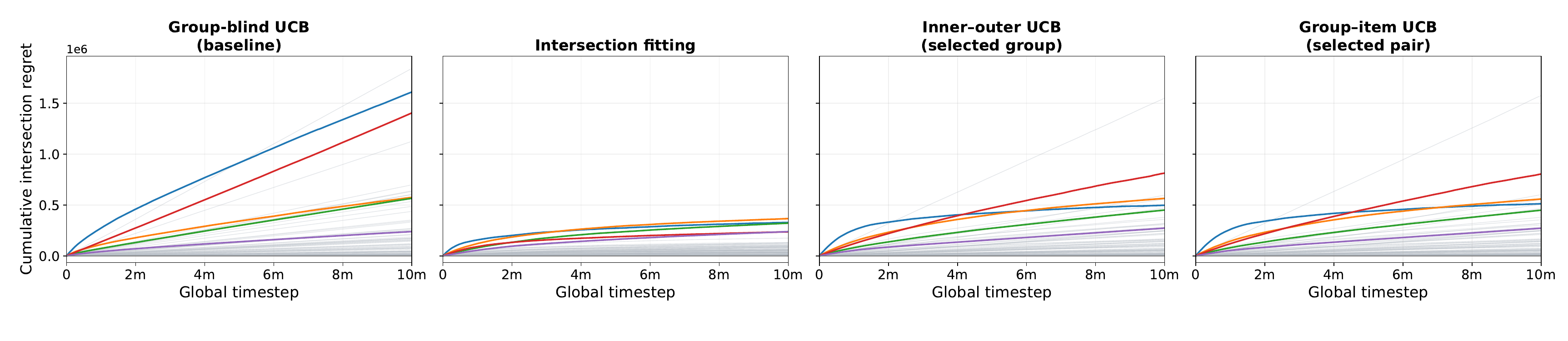}
    \caption{BoardGameGeek extended catalog ($n=72$) at slate size $K=3$. Intersection-Fitting consistently achieves the lowest cumulative regret across both evaluation levels. The skewed UCB-style indices caused by active-group updates prevent Inner--Outer UCB and Group--Item UCB from efficiently learning the optimal slates for atypical intersections.}
    \label{fig:bgg72_k3}
\end{figure*}

\begin{figure*}[h]
    \centering
    \includegraphics[width=\textwidth]{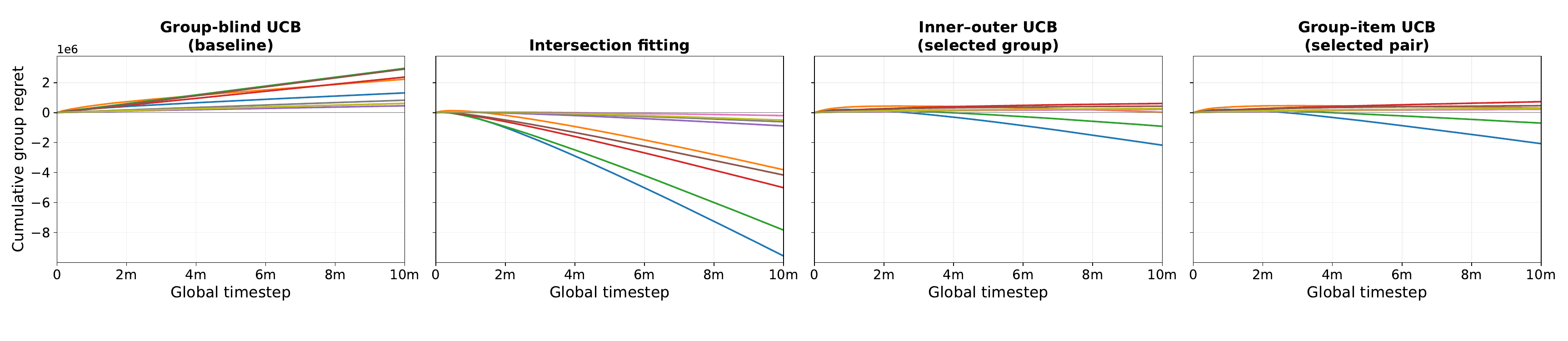}
    \includegraphics[width=\textwidth]{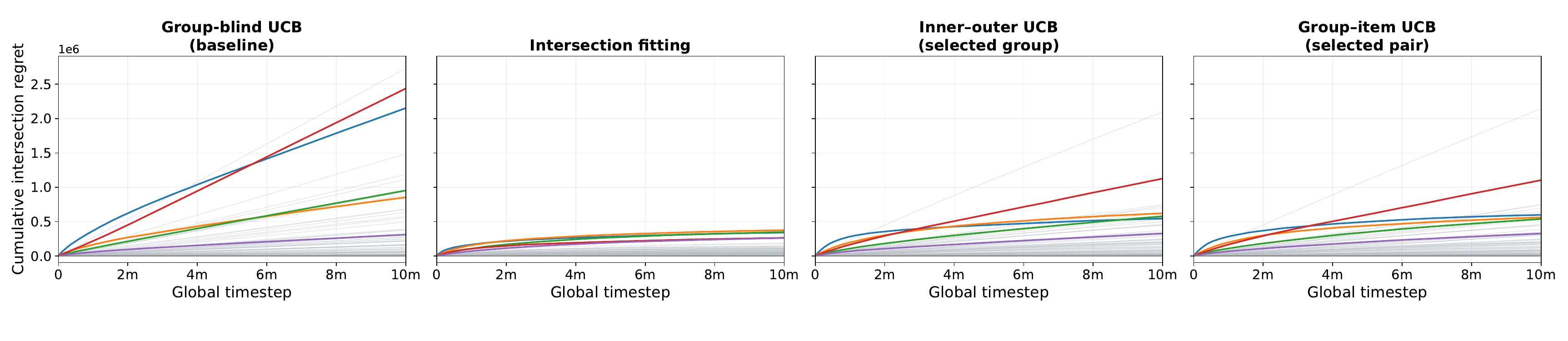}
    \caption{BoardGameGeek extended catalog ($n=72$) at slate size $K=5$. Even at larger slate sizes within the expanded catalog, Intersection-Fitting remains the dominant policy. The structural penalty of pooling heterogeneous data persists, allowing exact, unbiased intersection-level learning to outpace both heuristics.}
    \label{fig:bgg72_k5}
\end{figure*}

\section{Omitted Proofs}
\label{app:proofs}

\subsection{Proof of Theorem~\ref{thm:lower_bound}: Worst-Case Group Lower Bound}
\label{app:lower}
\begin{proof}
Use $K-1$ anchor items with deterministic reward $V_{\max}$; a candidate $u$ with reward $V_{\max}\cdot\mathrm{Bernoulli}(\tfrac12+\epsilon)$ in environment $+$ and $(\tfrac12-\epsilon)$ in $-$; and a candidate $w$ with $V_{\max}\cdot\mathrm{Bernoulli}(\tfrac12)$ in both. This uses $K+1\le n$ items. 

These laws apply on $g_k$-arrivals only; everything else, including arrivals, is identical across environments, and other items give $0$. The group-optimal slate is anchors$+u$ in $+$ and anchors$+w$ in $-$. Omitting $u$ in $+$ costs $\ge\epsilon V_{\max}$; including $u$ in $-$ costs $\ge\epsilon V_{\max}$ (serving both $u$ and $w$ drops an anchor and costs more). Since the user's scores are fresh, conditional on $S_t$ and an active round $\E[W_t]=\sum_{j\in S_t}v^*_{j,k}$, so $R_k$ is the expected slate loss.

For $\epsilon\le\tfrac14$, $\mathrm{kl}(\tfrac12+\epsilon,\tfrac12-\epsilon)=2\epsilon\ln\frac{1+2\epsilon}{1-2\epsilon}
\le\frac{8\epsilon^2}{1-2\epsilon}\le16\epsilon^2$. Only $u$'s score reveals the environment, and it contributes only on rounds where $g_k$ is active and $u$ is served, whose expected number is at most $\alpha_kT$; by the chain rule the KL of any history prefix is $\le16\epsilon^2\alpha_kT$. 

The fresh context has the same law in both environments. Bretagnolle--Huber \cite{lattimore2020bandit} on $\{u\in S_t\}$ at each active round gives $R_k^++R_k^-\ge\tfrac{\epsilon V_{\max}\alpha_kT}{2}e^{-16\epsilon^2\alpha_kT}$.

With $\epsilon=1/(4\sqrt{\alpha_kT})$ (valid since $\alpha_kT\ge1$) this is $\ge V_{\max}\sqrt{\alpha_kT}/(8e)$; the larger of $R^\pm_k$ is at least half of this.
\end{proof}

\subsection{Proof of Proposition~\ref{prop:isfit}: Intersection Fitting}
\label{app:isfit}
\begin{proof}
Fix one intersection $\iota$, and condition on its number of arrivals being $s$. The local reward streams are independent across arrivals, bounded in $[0,V_{\max}]$, and have item means $\mu_{\iota,j}$. Algorithm 2 sees no information within the intersection beyond these observations. Divide rewards by $V_{\max}$ for the proof, and write the normalized means as $\mu_j \in [0,1]$. The count clock $\tau$ below counts arrivals up to and including the current one.

Let $S^*$ be an optimal fixed $K$-slate in this intersection. At a local round with all items observed, define
\[ r_{j,\tau} = \sqrt{\frac{2 \log \max\{2,\tau\}}{N_{j,\tau}}}, \qquad U_{j,\tau} = \hat{\mu}_{j,\tau} + r_{j,\tau}. \]
On the event that every item's estimate is within its radius, optimism and the top-$K$ rule give
\[ \sum_{j \in S^*} \mu_j - \sum_{j \in S_\tau} \mu_j \le \sum_{j \in S^*} U_{j,\tau} - \sum_{j \in S_\tau} \mu_j \le \sum_{j \in S_\tau} (U_{j,\tau} - \mu_j) \le 2 \sum_{j \in S_\tau} r_{j,\tau}. \]

Any round with an unobserved item selected incurs regret at most $K$ and observes at least one new item. There are at most $n$ such rounds, so their total regret is at most $nK$. If any item is unobserved, the top-$K$ infinite-index rule selects at least one unobserved item; thus every remaining round has finite indices for all items.

For each item, expose an infinite i.i.d. sequence of its rewards on the intersection arrivals where it is selected. A union bound over all possible sample sizes $q \le \tau$ controls the adaptive counts via Hoeffding's inequality:
\[ \Pr(\text{some radius fails at local clock } \tau) \le \frac{2n\tau}{\max\{2,\tau\}^4} \quad (\tau \ge 1). \]
Independence across items is not needed for this union bound. Moreover,
\begin{align*}
    \sum_{\tau=1}^\infty \frac{2n\tau}{\max\{2,\tau\}^4} = 2n\left(\frac{1}{16} + \sum_{\tau=2}^\infty \tau^{-3}\right) \le 2n\left(\frac{1}{16} + \frac{1}{8} + \int_2^\infty x^{-3} dx\right) = \frac{5}{8}n \le n.
\end{align*}
Since per-round regret is at most $K$, all failed-confidence rounds contribute at most $nK$ in expectation. This deliberately loose bound leaves the stated $2nK$ additive term after initialization.

Let $N_j(s)$ denote item $j$'s total pulls in these $s$ arrivals. Summing the good-round confidence terms, and allowing extra nonnegative terms when convenient, gives
\begin{align*}
    \sum_{\text{good, initialized rounds}} 2 \sum_{j \in S_\tau} r_{j,\tau} \le 2 \sqrt{2 \log(2s)} \sum_{j=1}^n \sum_{q=1}^{N_j(s)-1} q^{-1/2} \le 4 \sqrt{2 \log(2s)} \sum_{j=1}^n \sqrt{N_j(s)} \le 4 \sqrt{2nKs \log(2s)},
\end{align*}
where empty sums are zero and $\sum_j N_j(s) = Ks$. Rescaling establishes, for $s > 1$,
\[ \E[\text{intersection regret} \mid T_\iota = s] \le V_{\max}\left[4\sqrt{2nKs \log(2s)} + 2nK\right]. \]
The regret is zero at $s=0$. Since $T_\iota \le T$, $\E[T_\iota] = p_\iota T$, and $\E[\sqrt{T_\iota}] \le \sqrt{p_\iota T}$, Jensen's inequality yields
\[ R_\iota(T) \le V_{\max}\left[4\sqrt{2nKp_\iota T \log(2T)} + 2nK\right]. \]
Conditioning on arrival counts is legitimate here because the local algorithm uses its intersection clock and the conditional reward law remains unchanged. For the group step, define
\[ b_\iota = \max_{|S|=K} \sum_{j \in S} \mu_{\iota,j}, \qquad b_k = \max_{|S|=K} \sum_{\iota \ni k} \frac{p_\iota}{\alpha_k} \sum_{j \in S} \mu_{\iota,j}. \]
The maximum of an average is at most the average of maxima, so
\[ \Gamma_k := \sum_{\iota \ni k} p_\iota b_\iota - \alpha_k b_k \ge 0. \]
Separating group arrivals into intersections yields the exact identity
\[ R_k(T) = \sum_{\iota \ni k} R_\iota(T) - T \Gamma_k \le \sum_{\iota \ni k} R_\iota(T). \]
It is an inequality, not equality between group regret and the sum of intersection regrets. Consequently,
\begin{align*}
    R_k(T) & \le V_{\max}\left[4\sqrt{2nKT \log(2T)} \sum_{\iota \ni k} \sqrt{p_\iota} + 2nKH_k\right] \\ & \le V_{\max}\left[4\sqrt{2nKH_k\alpha_k T \log(2T)} + 2nKH_k\right]
\end{align*}
using Cauchy-Schwarz and $\sum_{\iota \ni k} p_\iota = \alpha_k$. This is the stated bound.
\end{proof}

\subsection{Counterexample for Group-Based Heuristics}
\label{app:counter}
At $K=1$, Algorithms~\ref{alg:hier} and~\ref{alg:flat} are equivalent and can suffer linear group regret. Consider the universal group $0$ and base groups $A, B$ with deterministic rewards ($V_{\max}=1$). Let A-only users (mass $0.8$) have $(v_{ix},v_{iy})=(1,0)$; $A \cap B$ users (mass $0.1$) have $(v_{ix},v_{iy})=(0,1/2)$; and B-only users (mass $0.1$) have $(v_{ix},v_{iy})=(0,3/5)$. 

We bound the asymptotic group regret for $B$. The best fixed item for group $B$ is $y$, providing an unconditional per-round benchmark contribution of $0.1(1/2) + 0.1(3/5) = 0.11$.

First, we show the pair $(A, x)$ is selected infinitely often. Under winning-pair updates with a global clock and fixed tie order, every pair awake on A-only arrivals must have diverging counts. If any such pair maintained a finite limiting count, its index divided by $\sqrt{\log t}$ would remain bounded away from zero, whereas the ratio for diverging-count pairs tends to zero. Consequently, a bounded-count pair would eventually win every A-only arrival. Because there are finitely many pairs and infinitely many A-only arrivals, this is a contradiction. Thus, the count for $(A, x)$ diverges.

Second, we bound the sampling bias. Conditional on the past, if $(A, x)$ wins on an overlap ($A \cap B$) arrival, it also wins on an A-only arrival, since the latter strictly removes competitors while preserving tie order. Thus, whenever $(A, x)$ is selected, the conditional probability of an A-only arrival is at least $0.8 / (0.8 + 0.1) = 8/9$. By a bounded-increment martingale strong law along its sample times, its empirical mean satisfies $\liminf_{t\to\infty} \hat{v}_{x,A,t} \ge 8/9$ almost surely.

Third, we bound the overlap plays of item $y$. Fix a constant $a \in (3/5, 8/9)$. After an almost surely finite random time, $I_{x,A,t} \ge a$. The empirical mean of any $y$ pair is at most $3/5$. If a $y$ pair wins on an overlap round after this time, its pre-update count must satisfy $N_{y,k,t} \le \frac{2 \log \max\{2,t+1\}}{(a-3/5)^2}$. Each such win increments its count, so summing over the three possible owners of $y$ yields only $O(\log T)$ such plays. While another pair might beat $(A, x)$, any such pair involving $x$ yields zero reward on the overlap. 

Finally, group $B$ (mass $0.2$) earns at most $3/5$ on its B-only arrivals (mass $0.1$) and $0$ asymptotically on the overlap (mass $0.1$). This yields a total asymptotic reward of at most $0.06T + o(T)$ almost surely. Comparing this to $B$'s benchmark of $0.11T$, the pathwise shortfall divided by $T$ has a limit inferior of at least $0.05$. By boundedness and Fatou's lemma, the expected group regret satisfies:
\[ \liminf_{T\to\infty} \frac{R_B(T)}{T} \ge 0.05. \]

\section{Simulation and Calibration Details}
\label{app:simulation}
To evaluate our architecture, we require environments where both the group topologies and the intersectional item valuations are rigorously defined. We detail the calibration methodologies for our synthetic-valuation and empirical-rating datasets below.

\subsection{Goodreads: Lancaster Characteristics Calibration} 
The Goodreads dataset isolates the effect of real-world group topologies by pairing the observed club co-membership network with a controlled Lancaster characteristics model \cite{lancaster1966new}. We extract the membership patterns for $G=8$ primary clubs (e.g., Sci-Fi, Mystery, History) and construct a catalog of $n=72$ books, allocating nine per genre. Each book $j$ is defined by a feature vector $x_j \in \mathbb{R}^G$ dictating its characteristic weights. To guarantee variation and prevent exact ranking ties within $K$-slates, the nine books in genre $k$ are distributed across three purity tiers using $\pm 0.04$ internal feature offsets to ensure distinct item valuations: Core Flagship (3 items, primary weight $0.88$), Deep Hybrid (3 items, primary weight $0.68$, pre-defined adjacent genre $0.22$), and Crossover Hybrid (3 items, primary weight $0.50$, second adjacent genre $0.38$). Across all tiers, the remaining weight is distributed uniformly as diffuse background noise.

% The Goodreads dataset isolates the effect of real-world group topologies by pairing the observed club co-membership network with a controlled Lancaster characteristics model \cite{lancaster1966new}. We extract the membership patterns for $G=8$ primary clubs (e.g., Sci-Fi, Mystery, History).

% We construct a catalog of $n=72$ books, allocating nine books per genre. Rather than treating items as monolithic entities, each book $j$ is defined by a feature vector $x_j \in \mathbb{R}^G$ dictating its characteristic weights. To guarantee variation and prevent exact ranking ties within $K$-slates, the nine books in genre $k$ are distributed across three purity tiers, using $\pm 0.04$ internal feature offsets to ensure distinct item valuations:
% \begin{itemize}
%     \item Core Flagship (3 items): Primary weight centered at $0.88$, with the remainder distributed uniformly as diffuse background noise.
%     \item Deep Hybrid (3 items): Primary weight centered at $0.68$, a pre-defined adjacent genre at $0.22$, and background noise.
%     \item Crossover Hybrid (3 items): Primary weight centered at $0.50$, a second adjacent genre at $0.38$, and background noise.
% \end{itemize}

User base tastes are modeled as a base centroid matrix $\Theta \in \mathbb{R}^{G \times G}$, defined as the identity matrix scaled by $0.85$ plus $0.15$ uniform background taste. For a user belonging to demographic intersection $\iota$, their composite taste vector $\bar{\theta}_\iota$ is the mean of the centroids of their active clubs. The centroids are generative parameters, not necessarily the pooled conditional means.

The raw latent utility for an item is the dot product $u_{\iota,j} = x_j \cdot \bar{\theta}_\iota$. To map these latent utilities to Gaussian centers $\nu_{\iota,j}$, we apply an exact linear scaling, bounded to our valuation range:
\[ \nu_{\iota,j} = \text{clip}_{[1.0, 8.0]}\left(2.5 + 5.0 \frac{u_{\iota,j}-0.05}{0.70}\right) \]
Simulated noise and final reward clipping are applied after this structural scaling. During the simulation, a user's realized welfare reward $Y_t$ for recommended item $j$ is drawn as $Y_t = \text{clip}_{[r_{\min}, r_{\max}]}(\mathcal{N}(\nu_{\iota,j}, \sigma^2))$ with $[r_{\min}, r_{\max}] = [1, 8]$ (setting $V_{\max} = r_{\max}$) and $\sigma = 1.0$. Benchmarks use the exact post-clipping mean:
\begin{align*}
    v_{j,\iota_P}^* = m(\nu_{\iota,j}, \sigma) = r_{\min}\Phi(z_{lo}) + r_{\max}(1-\Phi(z_{hi})) + \nu_{\iota,j}(\Phi(z_{hi}) - \Phi(z_{lo})) + \sigma(\phi(z_{lo}) - \phi(z_{hi})),
\end{align*}
where $z_{lo} = (r_{\min}-\nu_{\iota,j})/\sigma$ and $z_{hi} = (r_{\max}-\nu_{\iota,j})/\sigma$, and $\Phi, \phi$ are the standard normal cdf and pdf.

\subsection{BoardGameGeek (BGG): Empirical Rating Calibration}
The BGG environment uses empirical ratings and inferred group structure: the empirical mean of observed ratings determines the Gaussian center for each (intersection, item) pair, and only the per-round noise around that mean is simulated as draws. 
% The BGG environment uses empirical ratings and inferred group structure: the expected welfare of each (intersection, item) pair is the empirical mean of observed ratings, and only the per-round noise around that mean is simulated as draws.
We filter the dataset to catalogs of games with at least 500 reviews across $G=9$ overarching themes. A user is defined as active in theme $g_k$ if their rating history contains at least five games tagged with that theme, comprising $\ge 20\%$ of their total rating history.

Because our objective is to maximize absolute user welfare, we use the raw, unadjusted user ratings (from 1.0 to 10.0) directly as the reward signal. However, using raw ratings on globally popular games introduces a structural hazard: universally liked games act as ``universal sinks,'' dominating the optimal slates for nearly every intersection. To construct an environment that is designed to reward personalization without altering the underlying welfare metric, we filter the catalog to explicitly remove universally liked items while preserving pure preference ordering. We isolate item-theme pairs with at least 20 ratings to ensure statistical reliability. If an item ranks in the top 15\% of at least half of the themes in which it is evaluated (with no minimum theme count floor), it is designated a universal sink and excluded. Each remaining game is exclusively assigned to the theme where it achieves its highest percentile rank. Finally, we extract the top 8 polarizing games per theme, ranked strictly by their raw empirical mean. This yields a deduplicated catalog of $n=72$ items, designed to increase preference variation.

However, this pooled vector can hide different optimal $K$-slates within $g_k$. The Gaussian center $\nu_{\iota,j}$ of each (intersection, item) pair is the empirical mean of observed ratings, and the raw intersection-item variance $\hat{\sigma}_{\iota,j}^2$ is calculated strictly from the observed ratings submitted by users belonging to that exact intersection. Due to clipping bounds, these Gaussian centers are distinct from the final post-clipping expected rewards used in the benchmarks. We apply a strict insufficient-data threshold of zero: if an intersection contains zero observed ratings for a specific item, we default to the item's global empirical mean and standard deviation.

During the simulation, a user's realized welfare reward $Y_t$ for recommended item $j$ is drawn as $\text{clip}_{[r_{\min},r_{\max}]}(\mathcal{N}(\nu_{\iota,j}, \sigma_{\iota,j}^2))$ with $[r_{\min},r_{\max}]=[1,10]$ and a standard deviation floor $\sigma_{\iota,j} := \max\{\hat{\sigma}_{\iota,j}, 0.5\}$ to prevent degenerate simulated reward distributions. Because raw ratings are used directly as the reward signal, no normalization is applied. Expected welfare is its post-clipping mean $m(\nu_{\iota,j},\sigma_{\iota,j})$, which the regret benchmarks use exactly:
\begin{align*}
    v_{j,\iota_P}^* = m(\nu_{\iota,j}, \sigma_{\iota,j}) = r_{\min}\Phi(z_{lo}) + r_{\max}(1-\Phi(z_{hi})) + \nu_{\iota,j}(\Phi(z_{hi})
    - \Phi(z_{lo})) + \sigma_{\iota,j}(\phi(z_{lo}) - \phi(z_{hi})), 
\end{align*}
where $z_{lo} = (r_{\min}-\nu_{\iota,j})/\sigma_{\iota,j}$ and $z_{hi} = (r_{\max}-\nu_{\iota,j})/\sigma_{\iota,j}$.

\section{Robustness to Variance}
\label{appendix:variance_robustness}
To verify the robustness of our experiments, we evaluate algorithmic variance across multiple independent simulation paths. We run the simulations across $5$ independent random seeds for both the Goodreads ($T = 200{,}000$) and BoardGameGeek ($T \in \{2 \times 10^6, 10^7\}$) environments, covering catalog sizes $n \in \{15, 72\}$ and slate sizes $K \in \{1, 3, 5\}$.

At each global timestep, we compute the absolute standard deviation $\sigma$ across the $5$ independent realizations. To evaluate relative dispersion, we renormalize this standard deviation by the mean cumulative regret magnitude. Table~\ref{tab:goodreads_variance} and Table~\ref{tab:bgg_variance} detail these final-timestep metrics for the Goodreads and BoardGameGeek simulations, respectively.

% We omit graphical confidence intervals from the primary regret plots because they would be visually indistinguishable from the mean trajectories. This is a result of the evaluation scale: because the evaluated algorithms exhibit linear cumulative regret, the visual $y$-axis scales linearly with $T$. In contrast, the absolute standard deviation of the cumulative regret grows at a sublinear rate of $\mathcal{O}(\sqrt{T})$.

Our variance analysis shows that at the final timestep, the relative standard deviation remains small---typically under $5\%$ of the total regret magnitude across evaluated policies for both environments. When the relative standard deviation appears mathematically large---specifically when Group--Item UCB drives intersection regret near zero in the Goodreads environment---this is the result of a vanishing denominator. The absolute standard deviation in these instances remains small (e.g., $\sigma < 35$ over $200{,}000$ rounds). Similarly, while absolute standard deviations in BoardGameGeek scale numerically higher due to the extended horizons, they remain proportionally bounded below $4\%$ of the realized regret. 
% Because theoretical error bounds would occupy sub-pixel vertical space within the plotted line widths, this tabular summary confirms that 
The empirical regret concentrates around its expectation, so the reported results are representative of expected algorithmic behavior.

\begin{table*}[htbp]
\centering
\caption{Robustness Analysis: Multi-Seed Variance in Goodreads ($T=200{,}000$)}
\label{tab:goodreads_variance}
\resizebox{\textwidth}{!}{
\begin{tabular}{ccclrrr}
\toprule
\textbf{Catalog} & \textbf{$K$} & \textbf{Level} & \textbf{Algorithm} & \textbf{Avg Final $|\text{Regret}|$} & \textbf{Avg Std Dev} & \textbf{Relative Std} \\
\midrule
$n=15$ & 1 & Group & Group-Blind & 97,102.9 & 863.0 & 0.8888\% \\
$n=15$ & 1 & Group & Intersection-Fit & 9,105.6 & 203.3 & 2.2331\% \\
$n=15$ & 1 & Group & Inner-Outer & 5,616.8 & 166.0 & 2.9552\% \\
$n=15$ & 1 & Group & Group-Item & 5,616.8 & 166.0 & 2.9552\% \\
$n=15$ & 1 & Intersection & Group-Blind & 728.4 & 34.5 & 4.7325\% \\
$n=15$ & 1 & Intersection & Intersection-Fit & 513.8 & 21.6 & 4.1955\% \\
$n=15$ & 1 & Intersection & Inner-Outer & 44.1 & 13.2 & 29.9056\% \\
$n=15$ & 1 & Intersection & Group-Item & 44.1 & 13.2 & 29.9056\% \\
\midrule
$n=15$ & 3 & Group & Group-Blind & 171,902.3 & 1,880.9 & 1.0942\% \\
$n=15$ & 3 & Group & Intersection-Fit & 9,519.7 & 303.8 & 3.1918\% \\
$n=15$ & 3 & Group & Inner-Outer & 6,228.3 & 300.5 & 4.8252\% \\
$n=15$ & 3 & Group & Group-Item & 2,432.3 & 236.6 & 9.7254\% \\
$n=15$ & 3 & Intersection & Group-Blind & 1,970.1 & 99.6 & 5.0559\% \\
$n=15$ & 3 & Intersection & Intersection-Fit & 813.2 & 39.8 & 4.8927\% \\
$n=15$ & 3 & Intersection & Inner-Outer & 515.9 & 34.3 & 6.6524\% \\
$n=15$ & 3 & Intersection & Group-Item & 54.1 & 26.5 & 48.9433\% \\
\midrule
$n=15$ & 5 & Group & Group-Blind & 144,763.9 & 2,008.5 & 1.3874\% \\
$n=15$ & 5 & Group & Intersection-Fit & 1,290.1 & 340.7 & 26.4127\% \\
$n=15$ & 5 & Group & Inner-Outer & 1,351.4 & 330.5 & 24.4585\% \\
$n=15$ & 5 & Group & Group-Item & 9,253.8 & 323.7 & 3.4978\% \\
$n=15$ & 5 & Intersection & Group-Blind & 2,250.1 & 111.7 & 4.9656\% \\
$n=15$ & 5 & Intersection & Intersection-Fit & 614.8 & 45.1 & 7.3310\% \\
$n=15$ & 5 & Intersection & Inner-Outer & 1,097.1 & 64.8 & 5.9039\% \\
$n=15$ & 5 & Intersection & Group-Item & 13.2 & 31.8 & 239.7872\% \\
\midrule
$n=72$ & 1 & Group & Group-Blind & 110,697.6 & 677.5 & 0.6121\% \\
$n=72$ & 1 & Group & Intersection-Fit & 25,970.6 & 224.6 & 0.8649\% \\
$n=72$ & 1 & Group & Inner-Outer & 26,950.3 & 241.2 & 0.8949\% \\
$n=72$ & 1 & Group & Group-Item & 26,950.3 & 241.2 & 0.8949\% \\
$n=72$ & 1 & Intersection & Group-Blind & 797.3 & 36.3 & 4.5541\% \\
$n=72$ & 1 & Intersection & Intersection-Fit & 706.2 & 27.3 & 3.8653\% \\
$n=72$ & 1 & Intersection & Inner-Outer & 195.5 & 14.6 & 7.4841\% \\
$n=72$ & 1 & Intersection & Group-Item & 195.5 & 14.6 & 7.4841\% \\
\midrule
$n=72$ & 3 & Group & Group-Blind & 319,645.9 & 2,044.7 & 0.6397\% \\
$n=72$ & 3 & Group & Intersection-Fit & 38,102.8 & 538.1 & 1.4121\% \\
$n=72$ & 3 & Group & Inner-Outer & 29,523.7 & 593.9 & 2.0116\% \\
$n=72$ & 3 & Group & Group-Item & 28,042.0 & 508.4 & 1.8130\% \\
$n=72$ & 3 & Intersection & Group-Blind & 2,382.1 & 107.3 & 4.5043\% \\
$n=72$ & 3 & Intersection & Intersection-Fit & 1,857.9 & 71.2 & 3.8324\% \\
$n=72$ & 3 & Intersection & Inner-Outer & 281.2 & 30.0 & 10.6836\% \\
$n=72$ & 3 & Intersection & Group-Item & 191.1 & 33.5 & 17.5491\% \\
\midrule
$n=72$ & 5 & Group & Group-Blind & 486,707.2 & 3,382.7 & 0.6950\% \\
$n=72$ & 5 & Group & Intersection-Fit & 47,176.0 & 719.0 & 1.5241\% \\
$n=72$ & 5 & Group & Inner-Outer & 32,945.7 & 726.7 & 2.2058\% \\
$n=72$ & 5 & Group & Group-Item & 28,027.6 & 767.6 & 2.7387\% \\
$n=72$ & 5 & Intersection & Group-Blind & 3,940.1 & 194.5 & 4.9367\% \\
$n=72$ & 5 & Intersection & Intersection-Fit & 2,741.5 & 93.3 & 3.4036\% \\
$n=72$ & 5 & Intersection & Inner-Outer & 621.1 & 49.7 & 8.0004\% \\
$n=72$ & 5 & Intersection & Group-Item & 132.6 & 34.8 & 26.2717\% \\
\bottomrule
\end{tabular}
}
\end{table*}

\begin{table*}[htbp]
\centering
\caption{Robustness Analysis: Multi-Seed Variance in BoardGameGeek ($T=10{,}000{,}000$)}
\label{tab:bgg_variance}
\resizebox{\textwidth}{!}{
\begin{tabular}{ccclrrr}
\toprule
\textbf{Catalog} & \textbf{$K$} & \textbf{Level} & \textbf{Algorithm} & \textbf{Avg Final $|\text{Regret}|$} & \textbf{Avg Std Dev} & \textbf{Relative Std} \\
\midrule
$n=15$ & 1 & Group & Group-Blind & 79,064.8 & 1,080.5 & 1.3666\% \\
$n=15$ & 1 & Group & Intersection-Fit & 72,146.8 & 532.8 & 0.7385\% \\
$n=15$ & 1 & Group & Inner-Outer & 36,677.2 & 599.5 & 1.6345\% \\
$n=15$ & 1 & Group & Group-Item & 36,677.2 & 599.5 & 1.6345\% \\
$n=15$ & 1 & Intersection & Group-Blind & 89,014.4 & 1,234.1 & 1.3864\% \\
$n=15$ & 1 & Intersection & Intersection-Fit & 36,947.7 & 481.4 & 1.3031\% \\
$n=15$ & 1 & Intersection & Inner-Outer & 61,128.1 & 465.6 & 0.7617\% \\
$n=15$ & 1 & Intersection & Group-Item & 61,128.1 & 465.6 & 0.7617\% \\
\midrule
$n=15$ & 3 & Group & Group-Blind & 151,523.5 & 1,063.6 & 0.7020\% \\
$n=15$ & 3 & Group & Intersection-Fit & 257,529.4 & 1,466.3 & 0.5694\% \\
$n=15$ & 3 & Group & Inner-Outer & 64,714.7 & 1,445.7 & 2.2340\% \\
$n=15$ & 3 & Group & Group-Item & 63,347.9 & 1,540.7 & 2.4322\% \\
$n=15$ & 3 & Intersection & Group-Blind & 155,000.2 & 1,012.5 & 0.6532\% \\
$n=15$ & 3 & Intersection & Intersection-Fit & 37,109.4 & 781.0 & 2.1045\% \\
$n=15$ & 3 & Intersection & Inner-Outer & 76,914.7 & 1,226.8 & 1.5950\% \\
$n=15$ & 3 & Intersection & Group-Item & 74,693.6 & 1,240.4 & 1.6606\% \\
\midrule
$n=15$ & 5 & Group & Group-Blind & 188,865.3 & 2,147.7 & 1.1372\% \\
$n=15$ & 5 & Group & Intersection-Fit & 351,137.7 & 1,587.4 & 0.4521\% \\
$n=15$ & 5 & Group & Inner-Outer & 83,368.0 & 1,404.8 & 1.6851\% \\
$n=15$ & 5 & Group & Group-Item & 82,216.5 & 1,471.9 & 1.7903\% \\
$n=15$ & 5 & Intersection & Group-Blind & 212,756.3 & 2,049.1 & 0.9631\% \\
$n=15$ & 5 & Intersection & Intersection-Fit & 33,921.2 & 1,027.2 & 3.0282\% \\
$n=15$ & 5 & Intersection & Inner-Outer & 79,980.5 & 946.4 & 1.1833\% \\
$n=15$ & 5 & Intersection & Group-Item & 74,247.5 & 845.5 & 1.1387\% \\
\midrule
$n=72$ & 1 & Group & Group-Blind & 341,354.4 & 1,171.9 & 0.3433\% \\
$n=72$ & 1 & Group & Intersection-Fit & 422,813.0 & 1,610.9 & 0.3810\% \\
$n=72$ & 1 & Group & Inner-Outer & 321,664.2 & 1,570.2 & 0.4881\% \\
$n=72$ & 1 & Group & Group-Item & 321,664.2 & 1,570.2 & 0.4881\% \\
$n=72$ & 1 & Intersection & Group-Blind & 306,410.8 & 1,018.3 & 0.3323\% \\
$n=72$ & 1 & Intersection & Intersection-Fit & 236,751.8 & 1,335.3 & 0.5640\% \\
$n=72$ & 1 & Intersection & Inner-Outer & 317,282.5 & 2,022.4 & 0.6374\% \\
$n=72$ & 1 & Intersection & Group-Item & 317,282.5 & 2,022.4 & 0.6374\% \\
\midrule
$n=72$ & 3 & Group & Group-Blind & 970,189.2 & 3,478.7 & 0.3586\% \\
$n=72$ & 3 & Group & Intersection-Fit & 2,149,024.8 & 3,362.6 & 0.1565\% \\
$n=72$ & 3 & Group & Inner-Outer & 314,186.0 & 4,907.2 & 1.5619\% \\
$n=72$ & 3 & Group & Group-Item & 319,181.6 & 4,197.1 & 1.3150\% \\
$n=72$ & 3 & Intersection & Group-Blind & 877,414.3 & 3,379.7 & 0.3852\% \\
$n=72$ & 3 & Intersection & Intersection-Fit & 297,348.8 & 2,239.1 & 0.7530\% \\
$n=72$ & 3 & Intersection & Inner-Outer & 520,033.7 & 2,859.9 & 0.5499\% \\
$n=72$ & 3 & Intersection & Group-Item & 517,766.4 & 2,720.3 & 0.5254\% \\
\midrule
$n=72$ & 5 & Group & Group-Blind & 1,587,617.8 & 3,227.2 & 0.2033\% \\
$n=72$ & 5 & Group & Intersection-Fit & 3,619,072.1 & 4,264.5 & 0.1178\% \\
$n=72$ & 5 & Group & Inner-Outer & 571,724.5 & 4,208.8 & 0.7362\% \\
$n=72$ & 5 & Group & Group-Item & 588,262.7 & 6,795.5 & 1.1552\% \\
$n=72$ & 5 & Intersection & Group-Blind & 1,340,602.7 & 3,156.7 & 0.2355\% \\
$n=72$ & 5 & Intersection & Intersection-Fit & 318,378.6 & 2,610.4 & 0.8199\% \\
$n=72$ & 5 & Intersection & Inner-Outer & 637,990.7 & 4,014.6 & 0.6293\% \\
$n=72$ & 5 & Intersection & Group-Item & 624,956.5 & 5,095.9 & 0.8154\% \\
\bottomrule
\end{tabular}
}
\end{table*}